\documentclass{article}

\usepackage[preprint]{neurips_2026}

\usepackage[utf8]{inputenc} 
\usepackage[T1]{fontenc}    
\usepackage[breaklinks=true,
colorlinks=true,
linkcolor=BrickRed,
urlcolor=Blue,
citecolor=Blue,
anchorcolor=Blue]{hyperref}       

\usepackage{url}            
\usepackage{booktabs}       
\usepackage{amsfonts}       
\usepackage{nicefrac}       
\usepackage{microtype}      
\usepackage{subcaption}

\usepackage{graphicx}
\usepackage{amsmath, amssymb}

\usepackage{multirow}
\usepackage{tablefootnote}

\usepackage[dvipsnames, table]{xcolor}
\usepackage{hhline}
\usepackage{adjustbox} 
\usepackage{setspace}
\usepackage{enumitem}
\usepackage{natbib}
\usepackage{amsthm}
\usepackage{wrapfig} 
\usepackage{xspace}
\usepackage{cleveref}
\usepackage{float}
\usepackage{hyperref}

\newtheorem{proposition}{Proposition}
\newtheorem{corollary}{Corollary}
\newtheorem{definition}{Definition}
\newtheorem{remark}{Remark}
\newtheorem{lemma}{Lemma}

\crefname{proposition}{Proposition}{Propositions}
\Crefname{proposition}{Proposition}{Propositions}
\crefname{corollary}{Corollary}{Corollaries}
\Crefname{corollary}{Corollary}{Corollaries}
\crefname{definition}{Definition}{Definitions}
\Crefname{definition}{Definition}{Definitions}
\crefname{remark}{Remark}{Remarks}
\Crefname{remark}{Remark}{Remarks}
\crefname{lemma}{Lemma}{Lemmas}
\Crefname{lemma}{Lemma}{Lemmas}

\newcommand{\rxrx}{\texttt{RxRx1}\xspace}
\newcommand{\celeba}{\texttt{CelebA}\xspace}

\newcommand{\ie}{\textit{i.e.,}\xspace}
\newcommand{\fext}{feature extractor\xspace}
\newcommand{\fop}{\ensuremath{\Phi}\xspace}
\newcommand{\transl}{translator\xspace}
\newcommand{\Transl}{Translator\xspace}

\newcommand{\trust}{\ensuremath{T}}

\title{Assessing Sample Quality in Conditional Generation under Compositional Shift}

\author{
\textbf{Berker Demirel\textsuperscript{1},
Valentino Maiorca\textsuperscript{1},
Marco Fumero\textsuperscript{1},}
\textbf{Theofanis Karaletsos\textsuperscript{2,3},} \\
\textbf{Francesco Locatello\textsuperscript{1}}
\\[0.5em]
\textsuperscript{1}Institute of Science and Technology Austria (ISTA),\\
\textsuperscript{2}Pyramidal Inc, San Francisco, CA, USA,\\
\textsuperscript{3}Achira Inc, San Francisco, CA, USA
}

\begin{document}

\maketitle

\begin{abstract}

Conditional generators provide a natural tool for controllable generation, including settings where the desired condition is a new composition of observed attributes or experimental factors. In many applications, especially in scientific domains, such models are attractive to explore conditions for which real samples are rare, expensive, or not yet observed. However, this creates a circularity for evaluation: standard conditional quality metrics require a reference target distribution, but in the extrapolative regime that distribution is unavailable by definition.
We address this problem with a post-hoc, per-sample trust score for assessing conditional samples using only the training distribution. The score combines two estimable quantities: global realism, measuring compatibility with the real data manifold, and attribute-wise faithfulness, measuring whether a sample is closer to the requested attributes than to plausible alternatives. 
We show that the score can recover meaningful comparisons across extrapolated generations, under a mild coverage condition on the observed attributes. 
These comparisons enable effective filtering, ranking, and abstention of generations and can be used directly on off-the-shelf pretrained models. 
In biological imaging, selected samples preserve real morphological structure better and improve downstream predictive performance, while similar gains are observed on controlled vision benchmarks. Finally, we show how the score can be applied during generation, enabling abstention before full decoding. 
Code is available at \href{https://github.com/berkerdemirel/faithful-cond-gen}{https://github.com/berkerdemirel/faithful-cond-gen}.
\end{abstract}

\section{Introduction}

\looseness=-1Conditional diffusion models can generate samples for user-specified conditions, including class labels~\citep{dhariwal2021diffusionbeatgans, rombach2022latentdiff}, attribute tuples~\citep{demirel2025morphgen} and biological perturbations~\citep{navidi2025morphodiff}. This is increasingly appealing in scientific settings, where synthetic data are useful not because they reproduce what has already been measured, but because they may help explore conditions that are rare, expensive, or not yet observed~\citep{rxrx1sypetkowski2023,azizi2023synthetic,he2023synthetic}. For example, if we want to achieve certain phenotypic properties in a cell population, we would like to prioritize wet-lab testing of perturbations that are most likely to produce the desired outcome. A generative model can support this process by simulating experiments in silico before laboratory validation. The challenge is determining if these predictions can be trusted, as they must be both visually realistic and faithful to the requested condition, despite the target population being unseen.

%
Evaluating conditional generation faces a circularity issue: standard distributional metrics such as Fréchet Inception Distance (FID)~\citep{heusel2017fid} and Kernel Inception Distance (KID)~\citep{binkowski2018kid}, along with their conditional variants, score generated samples by comparison with real samples from the target distribution. Yet conditional generators are often most useful precisely in settings where such target samples are unavailable. In this extrapolative regime, target-distribution metrics can still serve as controlled validation tools, but they cannot decide whether an individual generated sample should be trusted at deployment time.

\begin{figure}
    \centering
    \includegraphics[
        width=\linewidth,
        trim=15pt 0pt 0pt 10pt,
        clip
    ]{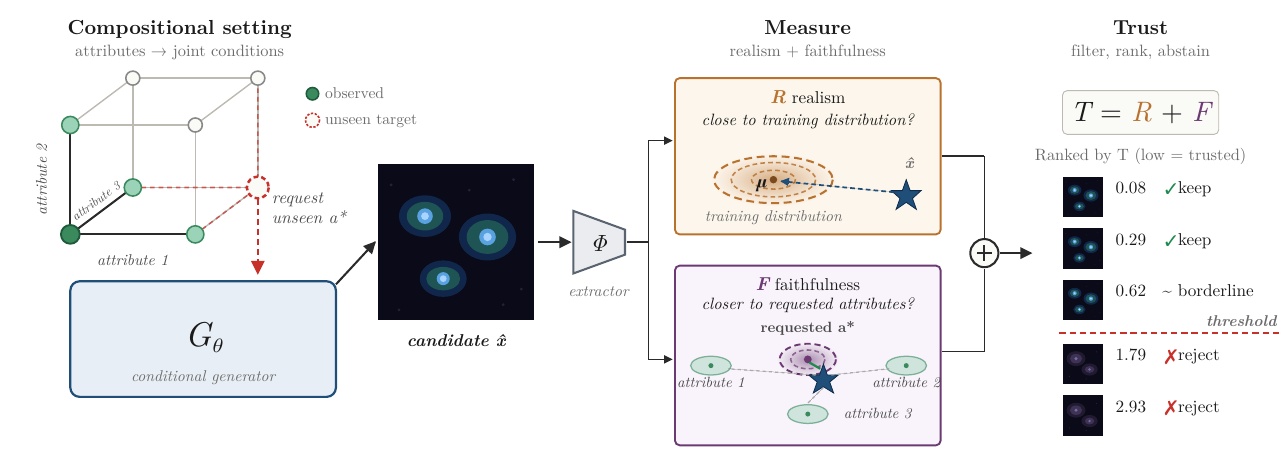}
    \caption{\textbf{Pipeline for trust scoring under compositional shift.} A conditional diffusion model \(G_\theta\) is queried with an unseen joint condition \(a^\star\) and produces candidate samples \(\hat x\). Features from \(\Phi\) are used to compute a realism term \(R\), which measures proximity to the real training distribution, and a faithfulness term \(F\), which measures alignment with the requested attribute values. Their sum \(T=R+F\) provides a calibrated trust score used to rank, filter, or abstain from generated samples.}
    \label{fig:teaser}
\end{figure}

We therefore ask a different question: \emph{what sample quality information is identifiable from the real training distribution alone?} 
We focus on two real-data-calibrated quantities that remain observable. A sample should be \emph{globally realistic}, meaning compatible with the feature geometry of real data, and \emph{attribute-wise faithful}, meaning closer to the requested attribute values than to competing values. These quantities capture separable failure modes: a sample may be realistic but wrong for the requested condition, or condition-consistent but outside the real-data manifold. Existing post-hoc reliability tools typically capture only one side of this split: out-of-distribution and one-class-style scores measure global compatibility with an in-distribution set~\citep{sun2022out, he2023synthetic}, while prompt- or classifier-based alignment scores measure agreement with a requested condition~\citep{muller2025mahalanobis,liu2020energy,ramos2025beyond}. 

Our method combines both views in a post-hoc trust score computed in the latent space of an auxiliary \fext, using only the generated sample and real training-set statistics.
The key idea is attribute-level decomposition: instead of matching a sample to an unseen condition as a whole, we evaluate each constituent attribute independently, checking whether its requested value is favored over plausible alternatives in observed real-data contexts.
Under this coverage condition,  we show that attribute-wise decisions remain identifiable even when the full joint target is not. This formalizes both what the score can recover and what remains fundamentally unidentifiable without target samples.

Most notably, we demonstrate that the same real-data geometry allows our score to be evaluated before generation is complete. By aligning the diffusion model to the \fext space, we are able to evaluate the same trust score directly in representation space, during generation. This avoids the computational cost of repeated forward passes through the \fext and enables low-quality samples to be discarded before full generation.


Our contributions can be summarized as follows:
\begin{itemize}[leftmargin=*]
    \item \textbf{Reference-free trust score.} We recast per-sample quality assessment for conditional generators as a problem solvable without observing the joint target conditioning. Decomposing it into two geometric quantities, global realism and attribute-wise faithfulness, yields our post-hoc trust score computable from a pretrained generator, an encoder, and training-set statistics.
    \item \textbf{Attribute-level identifiability.} We show that meaningful attribute-level comparison scores can be identified even when the full joint target is not.
    \item \textbf{Empirical validation.} On \rxrx and \celeba our score reliably filters, ranks, and curates samples under support shift under both KID-based and downstream task-based validations. 
    \item \textbf{Early sample rejection}. We show how the score can be lifted from a post-hoc method to the latent representations inside the generator itself via a simple geometric alignment objective, enabling quality assessment and abstention before decoding.
\end{itemize}

\section{Related Work}

\paragraph{Image quality metrics.}
\looseness=-1
There is active literature evaluating generative models by comparing real and generated sample sets in a feature space, including FID~\citep{heusel2017fid}, KID~\citep{binkowski2018kid}, density/coverage~\citep{naeem2020fidelity}, probabilistic precision/recall~\citep{park2023probabilistic,sajjadi2018assessing, kynkaanniemi2019improved}, as well as sample-level metrics closer to our setting~\citep{alaa2022faithful}. However, these metrics require a collection of real reference samples from the target distribution being evaluated. In contrast, our method combines capabilities that no other method offers as a set: it is per-sample, post-hoc (\ie after generation, and therefore, model agnostic), and most importantly, it is designed for compositional generalization, where no real samples from the requested target condition are available.

\paragraph{Compositional generalization in conditional diffusion.}

A growing line of work studies whether conditional diffusion models can generate unseen compositions of attributes or concepts. The closest connection to our setting is \cite{okawa2023compositional}, which analyzes when compositional abilities emerge in a synthetic discrete conditional-generation task. Related work studies the emergence of hidden compositional capabilities in concept space~\citep{park2024emergence}, and analyzes hierarchical compositional generalization in diffusion models through context-free grammars~\citep{favero2025compositional}. Other work modifies training objectives to encourage logical or compositional generation under partial support~\citep{gaudicoind}. Our focus is complementary. We do not try to improve the generator or characterize when compositional generalization emerges. Instead, we ask how to evaluate and rank individual generated samples after training, especially when the requested joint condition has no real target reference set. This shifts the question from whether a model can sometimes extrapolate to whether a particular extrapolated sample should be trusted.

\paragraph{OOD detection and support scoring.}
\looseness=-1
Out-of-distribution detection and feature-space support estimation provide another line of related work, but they are primarily developed for discriminative prediction rather than as trust mechanisms for synthetic generations. Mahalanobis-based detectors measure compatibility with an in-distribution feature geometry~\citep{lee2018simple,muller2025mahalanobis}, distance-based methods provide nonparametric alternatives~\citep{sun2022out, rmahren2021simple, nnguidepark2023, FDBDliu2024}, and energy-based methods score whether an input lies in a high-confidence region of a learned representation space~\citep{liu2020energy}. We repurpose ~\cite{lee2018simple} to build a trust mechanism for conditional diffusion models. For realism, we ask whether a sample looks like real data from the domain; for faithfulness, we ask whether, within a real-data-calibrated geometry, it is closer to the requested attribute value than to its competitors. This complements unconditional support scoring by incorporating the requested conditioning, even when the requested joint target is unavailable.

\paragraph{Faithfulness and alignment scores for generated data.}
Conditional faithfulness is often evaluated with discriminative surrogates. In compositional-generation studies, probes can test whether generated samples contain requested attributes~\citep{okawa2023compositional}. In text-to-image settings, CLIP-style alignment is widely used as a proxy for prompt adherence, including when assessing whether synthetic images are useful for recognition tasks~\citep{he2023synthetic}. Recent work has also moved alignment scoring into the denoising process, evaluating prompt--latent agreement before a final image is decoded~\citep{ramos2025beyond}. These approaches are useful discriminative tests of whether a generated sample expresses a requested label, prompt, or attribute, but they are not directly image-quality metrics: an image can be classified as condition-consistent while remaining unrealistic or far from the real data geometry. Our goal is to identify what conditional quality remains estimable from the training distribution alone when the requested joint target is unavailable. This yields a decomposition into global realism and reference-anchored per-attribute faithfulness, where faithfulness is not a standalone classifier score but a conditional component of a real-data-calibrated quality score.


\section{Method}
\label{sec:method}

We propose a post-hoc per-sample \emph{trust score} for conditional generation when the requested joint condition may be absent from the real training support. The score is computed w.r.t. any individually generated sample, relying only on the training data and a pretrained \fext \fop. It combines two measurable quantities: \emph{global realism} (\ie compatibility with the real-data feature geometry), and \emph{attribute-wise faithfulness},  (\ie agreement with each requested attribute value relative to competing ones). While the proposed score is applicable to any pretrained conditional generative model, in the following we assume a latent conditional diffusion model~\citep{rombach2022latentdiff}. \Cref{fig:teaser} illustrates our pipeline.

\paragraph{Generative setting.}
Let \(\mathcal A_1,\dots,\mathcal A_K\) be finite attribute-value spaces, and let \(a^\star=(a_1^\star,\dots,a_K^\star)\in
\mathcal A_1\times\cdots\times\mathcal A_K\) denote the requested joint condition.
We consider a latent diffusion model $G_{\theta}$ with latent space \(\mathcal Z\), input space \(\mathcal X\), decoder \(D:\mathcal Z\to \mathcal X\), and denoising network \(\epsilon_\theta\). Starting from Gaussian noise \(z_T\in \mathcal Z\), the reverse process conditioned on \(a^\star\) produces a final latent \(\hat z_0(a^\star)\in \mathcal Z\), which is decoded into the generated sample \(\hat x(a^\star) = D(\hat z_0(a^\star)) \in \mathcal X\).

The joint condition \(a^\star\) may appear in training or may lie under compositional support shift, where the specific tuple is unseen but its individual attributes appear in other real training samples.
%
Let \(\fop:\mathcal X\to\mathbb R^d\) denote the \fext.
We score samples using normalized features \(y(x)=\fop(x)/\|\fop(x)\|_2\). All means, covariance estimates, precision matrices, and calibration constants below are fit only on normalized real training features. 

\paragraph{Global realism.}
Let \(\mu_{\mathrm{real}}\) and \(\Sigma_{\mathrm{real}}\) be the empirical mean and regularized covariance of real normalized features. We define the global Mahalanobis energy
\[
E_{\mathrm{real}}(y)
=
(y-\mu_{\mathrm{real}})^\top
\Sigma_{\mathrm{real}}^{-1}
(y-\mu_{\mathrm{real}}),
\]
and standardize it as \(R(y)=(E_{\mathrm{real}}(y)-m_R)/s_R\), where \(m_R\) and \(s_R\) are the mean and standard deviation of \(E_{\mathrm{real}}\) on real calibration features. Therefore, larger \(R\) indicates lower realism.

\paragraph{Attribute-wise faithfulness.}
For each attribute \(k\) and value \(v\in\mathcal A_k\), define the real-data prototype \(\eta_{k,v}=\mathbb E[y\mid a_k=v]\), estimated by averaging the normalized features \(y(x)\) of all real training samples whose \(k\)-th attribute value is \(v\).
For each attribute \(k\), we estimate a shared precision matrix \(P_k\succ0\) from the pooled within-value covariance of real samples and define
\[
d_k(y;v)
=
(y-\eta_{k,v})^\top P_k (y-\eta_{k,v}).
\]
For requested value \(t\in\mathcal A_k\), the target-vs-competitor margin is
\[
M_k(y;t)=d_k(y;t)-\min_{v\neq t}d_k(y;v).
\]
Thus \(M_k(y;t)<0\) means that \(y\) is closer to the requested value than to every competing value of attribute \(k\).
We standardize each margin on real samples with \(a_k=t\): \(F_k(y;t)=(M_k(y;t)-m_{k,t})/s_{k,t}\), where \(m_{k,t}\) and \(s_{k,t}\) are the corresponding real calibration mean and standard deviation. The total faithfulness score is \(F(y;a^\star)=\frac{1}{K}\sum_{k=1}^K F_k(y;a_k^\star)\). Larger \(F\) indicates weaker agreement with the requested attributes.

The final trust score is $\trust(y;a^\star)=R(y)+F(y;a^\star).$ 
The decomposition is interpretable: \(R\) detects samples that are globally far from the real feature distribution, while \(F\) detects samples that are realistic but mismatched to the requested attributes. Estimation details are given in \Cref{app:estimation-details}.

\subsection{When can we assess sample quality in practice?}
\label{sec:feature-acquisition}

We remark that the same score \trust{} can be used both after generation and during generation; only the way we obtain the \fop-compatible features \(y\) changes. \Cref{fig:mod_pipeline} in Appendix \ref{app:model-pipeline} illustrates the difference. 

\paragraph{Post-generation scoring.}
After generation, we decode the image and apply the \fext:
\[
y^{\mathrm{post}}
=
\frac{\fop(D(\hat z_0(a^\star)))}{\|\fop(D(\hat z_0(a^\star)))\|_2}.
\]
We then evaluate \(\trust(y^{\mathrm{post}};a^\star)\). This directly scores the generated image in the feature-space geometry, but requires the VAE decoder and a \fop pass.

\paragraph{During-generation scoring.}
\label{par:during-generation}
While generating, we skip decoding and \fext encoding by mapping internal denoising representation into a \fop-compatible space. Let \(r_{\ell,\tau}(a^\star)\in\mathbb R^{d_{\ell}}\)
denote the representation extracted at layer \(\ell\) and denoising step \(\tau\) along the inference trajectory conditioned on \(a^\star\). We learn a \emph{\transl}, parametrized by a shallow MLP, \(g_{\phi,\ell}:\mathbb R^{d_\ell}\to\mathbb R^d\)
to match the pairs \(\bigl(r_{\ell,\tau}(x_i,a_i),\, y(x_i)\bigr)\) obtained from real training samples \(x_i\) with observed attribute condition \(a_i\).
\[
y_{\ell,\tau}^{\mathrm{map}}
=
\frac{g_{\phi,\ell}(r_{\ell,\tau}(a^\star))}
{\|g_{\phi,\ell}(r_{\ell,\tau}(a^\star))\|_2}.
\]
At inference time, we evaluate \(\trust(y_{\ell,\tau}^{\mathrm{map}};a^\star)\). Thus post-generation and during-generation scoring use the same real-data-calibrated metric; they differ only in whether the feature is obtained from the decoded image or from the denoising trajectory.

\paragraph{Online abstention.}
\label{sec:abstention-method}
The mapped score enables early rejection during denoising. Given a layer-step-specific threshold \(\kappa_{\ell,\tau}\), we abstain when
\[
\trust(y_{\ell,\tau}^{\mathrm{map}};a^\star)>\kappa_{\ell,\tau}.
\]
This rejects samples predicted to have low trust before completing the full denoising trajectory, decoding, and post-generation encoder pass. Details of the \transl training and compute accounting are given in \Cref{app:compute-details}.

\section{Theory: Identifiability under missing target distributions}
\label{sec:theory}
If the requested condition \(a^\star\) is missing from the real data, then the behavior of \(y\) under that condition cannot be estimated directly from the data.

\begin{proposition}[Informal: non-identifiability of missing targets]
\label{prop:missing-target-informal}
Let \(S\subseteq\mathcal A_1\times\cdots\times\mathcal A_K\) be the observed support of
the conditioning variable \(a\), and let \(a^\star\notin S\). From the observed joint
distribution of \((y,a)\) restricted to \(S\), the conditional distribution of \(y\) at \(a^\star\) is not identifiable without assumptions relating observed and unobserved conditions. Therefore, no post-hoc score using only the observed real distribution can certify full conditional fidelity to the missing target distribution.
\end{proposition}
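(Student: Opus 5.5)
The plan is a two-world construction: exhibit two joint laws of \((y,a)\) that agree on everything observable but differ at \(a^\star\). Fix any observed law \(P\) on \(\mathbb R^d\times(\mathcal A_1\times\cdots\times\mathcal A_K)\) with \(P(a\in S)=1\), marginal \(\pi\) on \(S\), and conditionals \(P_{y\mid a=s}\) for \(s\in S\). Formally, identifiability means the map from the full model to the observed law \(\mathcal L(y,a\mid a\in S)\) determines \(P_{y\mid a=a^\star}\). To refute it, I construct two complete models \(Q^{(1)},Q^{(2)}\) over all conditions (including \(a^\star\)), with the following properties.

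First, I extend the condition marginal to \(\tilde\pi\), which assigns arbitrary mass \(\epsilon>0\) to \(a^\star\) (or zero mass, treating \(a^\star\) as a query condition). It renormalizes on \(S\) so that \(\tilde\pi(\cdot\mid S)=\pi\).

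Second, I set \(Q^{(j)}_{y\mid a=s}=P_{y\mid a=s}\) for all \(s\in S\) and \(j=1,2\).

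Third, I choose \(Q^{(1)}_{y\mid a=a^\star}=\nu_1\) and \(Q^{(2)}_{y\mid a=a^\star}=\nu_2\) for any two distinct distributions on the unit sphere, since features are normalized.

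For concreteness, take \(\nu_j=\delta_{u_j}\) with \(u_1\neq u_2\). Alternatively, take \(\nu_1=P_{y\mid a=s_1}\) and \(\nu_2=P_{y\mid a=s_2}\) for two observed conditions, which makes both worlds look "plausible". The restrictions to \(S\) then coincide by construction, so every functional of the observed data, including \(\mu_{\mathrm{real}},\Sigma_{\mathrm{real}},\eta_{k,v},P_k\) and all calibration constants, is identical in the two worlds.

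For the consequence about scores, let \(T\) be any post-hoc score. It is a measurable function \(T(y;a^\star,\mathcal D)\) of the sample, the requested condition, and statistics \(\mathcal D\) of the observed distribution. Since \(\mathcal D\) is the same in both worlds, \(T\) is the same function in both. A certificate of full conditional fidelity would need \(T\) to be small exactly when \(y\) is typical under the true target law, e.g. \(T(y)\le\kappa\) iff \(y\in\operatorname{supp}\nu\). Take \(\nu_1,\nu_2\) with disjoint supports, e.g. the two point masses above. Any fixed rule then accepts some \(y\) in the wrong support in at least one world, or rejects every sample in at least one world. Equivalently, no estimator of a fidelity functional \(\psi(\nu)\), such as \(\mathrm{KID}\) to the target or target likelihood, can be consistent in both worlds, because \(\psi(\nu_1)\neq\psi(\nu_2)\) while the data law is identical.

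The main obstacle is making "certify full conditional fidelity" precise enough to yield a contradiction, rather than the construction itself, which is trivial. I would formalize it as follows: a certifying score must, for every admissible model, make \(\{T\le\kappa\}\) contain a fixed fraction of the target mass while excluding mass of some fixed alternative. The disjoint-support pair then violates this in one of the two worlds. I would also remark that any structural assumption, such as additive attribute effects or the coverage condition used later, is exactly what rules out such pairs. This motivates restricting attention to attribute-wise quantities that are identifiable from observed contexts.
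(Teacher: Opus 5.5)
Your argument is correct. The paper never actually proves this proposition. It is stated informally as motivation, with no appendix proof, so there is nothing to match line by line. The closest thing in the paper is the support-confounding proposition, whose Models A and B agree on the observed cells $\{0000,1111\}$ but attribute the contrast between them differently. Your two-world construction uses the same observational-equivalence device. You apply it to the whole conditional law at $a^\star$ rather than to an additive decomposition of means, and it works for an arbitrary support $S$ with $a^\star\notin S$. You also go further than the paper by making ``certify'' precise as a two-point indistinguishability statement, which the paper leaves as prose.

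Two small corrections.

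First, in your formalization the excluded alternative cannot be a single law fixed in advance. If it were, $T$ could accept both $u_1$ and $u_2$, reject only that alternative, and no contradiction would arise. The requirement has to quantify over non-target laws, at least those whose support is disjoint from the target's. Then in world $1$ the score must reject $u_2$, and in world $2$ it must reject $u_1$. With that reading your case analysis closes:
\begin{itemize}
\item accepting both points accepts a wrong-support point in each world;
\item rejecting both points rejects the true target in each world;
\item accepting exactly one point rejects the true target in the other world.
\end{itemize}
Your phrase ``rejects every sample'' should accordingly read ``rejects all target mass''.

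Second, your closing remark that the coverage condition rules out such pairs is wrong. Your construction never touches the cells in $S$, so it goes through verbatim when $S$ has reference coverage. This is why the paper, even under coverage, only claims identification of the anchored comparator $M_k^{\mathrm{ref}}$ and not of the target law. Only assumptions that tie the law at $a^\star$ to the observed cells can exclude your two worlds. Examples are additive means, plus something like a shared noise law if you want the full conditional and not just its mean.
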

This motivates a weaker target: attribute-level comparisons. Instead of asking whether a sample matches the unobserved joint condition \(a^\star\), we ask whether it is closer to each requested attribute value than to competing values. 

\begin{definition}[Reference coverage]
\label{def:reference-coverage}
Let \(a_{-k}\) denote all attributes except \(a_k\). The observed support has
\emph{reference coverage} if there exists a reference condition \(\bar a\) such that,
for every attribute \(k\) and value \(v\in\mathcal A_k\),
\[
\mathbb P(a_k=v,\;a_{-k}=\bar a_{-k})>0.
\]
\end{definition}

Reference coverage provides just enough conditioning variation to isolate attribute effects without relying on structural assumptions. When attributes only appear in fixed combinations, their effects are support-confounded: the data cannot tell which attribute accounts for an observed change. Reference coverage avoids this ambiguity while using the smallest possible set of observed joint conditions. We formalize this minimality result and give a support-confounding example in \Cref{app:reference-coverage-minimal-proof,app:support-confounding}.



Moreover, under reference coverage, we can identify a fixed-context attribute comparator. Define the reference prototype
\(\mu^{\mathrm{ref}}_{k,v}:=\mathbb E[y\mid a_k=v,a_{-k}=\bar a_{-k}]\). For a fixed \(P_k\succ0\), define
\(d_k^{\mathrm{ref}}(y;v):=(y-\mu^{\mathrm{ref}}_{k,v})^\top P_k(y-\mu^{\mathrm{ref}}_{k,v})\), and for target value \(t\),
\[
M_k^{\mathrm{ref}}(y;t)
:=
d_k^{\mathrm{ref}}(y;t)-\min_{v\neq t}d_k^{\mathrm{ref}}(y;v).
\]
A negative margin means that \(y\) is closer to the requested value \(t\) than to every competitor for attribute \(k\), in the reference context.

\begin{proposition}[Point identification of the reference-anchored comparator]
\label{prop:anchored-identification}
Assume reference coverage, let \(y\) be integrable, and fix \(P_k\succ0\). Then \(\mu^{\mathrm{ref}}_{k,v}\), \(d_k^{\mathrm{ref}}(y;v)\), and \(M_k^{\mathrm{ref}}(y;t)\) are point-identified from the observed joint distribution of \((y,a)\).
\end{proposition}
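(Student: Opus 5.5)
The plan is to reduce everything to the identification of the reference prototypes \(\mu^{\mathrm{ref}}_{k,v}\). Once these are identified, \(d_k^{\mathrm{ref}}\) and \(M_k^{\mathrm{ref}}\) are fixed deterministic functions of them, of the fixed matrix \(P_k\), and of the evaluation point \(y\). Here "point-identified" means: any two joint laws of \((y,a)\) that agree on the observed distribution yield the same value of the quantity.

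\textbf{Step 1: the prototypes.} Fix \(k\) and \(v\in\mathcal A_k\). Consider the event \(B_{k,v}=\{a_k=v,\;a_{-k}=\bar a_{-k}\}\), which is the single joint condition \((v,\bar a_{-k})\). By reference coverage (\Cref{def:reference-coverage}), \(\mathbb P(B_{k,v})>0\), so \((v,\bar a_{-k})\in S\). The conditional expectation is then the elementary ratio
\[
\mu^{\mathrm{ref}}_{k,v}=\frac{\mathbb E\bigl[y\,\mathbf 1_{B_{k,v}}\bigr]}{\mathbb P(B_{k,v})}.
\]
This ratio is well defined and finite because \(y\) is integrable. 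Both numerator and denominator are functionals of the observed joint distribution of \((y,a)\) restricted to \(S\).

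This is exactly where the argument avoids \Cref{prop:missing-target-informal}. That proposition concerns a condition with zero probability, where the conditional law is not pinned down. Here every condition that is used carries positive mass, so the ratio is the same for every version of the conditional expectation and for every compatible joint law. I will state this uniqueness explicitly, since it is the only non-formal point.

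\textbf{Step 2: the distances.} For any fixed \(y\in\mathbb R^d\), the quantity \(d_k^{\mathrm{ref}}(y;v)=(y-\mu^{\mathrm{ref}}_{k,v})^\top P_k(y-\mu^{\mathrm{ref}}_{k,v})\) is a continuous function of \((y,\mu^{\mathrm{ref}}_{k,v},P_k)\). Here \(P_k\) is fixed by assumption; if it is instead estimated from within-value covariances on \(S\), the same positive-mass argument applies. So \(d_k^{\mathrm{ref}}(y;v)\) is identified.

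\textbf{Step 3: the margin.} Since \(\mathcal A_k\) is finite, \(\min_{v\neq t}d_k^{\mathrm{ref}}(y;v)\) is a minimum over finitely many identified numbers. It is therefore identified, and so is \(M_k^{\mathrm{ref}}(y;t)\).

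\textbf{Where care is needed.} I expect no real obstacle. The only subtle point is Step 1: identification relies on conditioning on events of positive probability, and this is exactly what coverage guarantees, simultaneously for all \(k\) and \(v\) with the same \(\bar a\). I would add a remark that without coverage some \(B_{k,v}\) could be null, and the argument would then break, as in \Cref{prop:missing-target-informal}.

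I would also handle the degenerate case \(|\mathcal A_k|=1\). There the minimum over competitors is empty, so I would either assume \(|\mathcal A_k|\ge2\) or adopt a convention for the empty minimum.
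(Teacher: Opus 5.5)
Your proposal is correct and follows the paper's proof essentially step for step. Reference coverage gives each anchored cell $\{a_k=v,\,a_{-k}=\bar a_{-k}\}$ positive probability, so $\mu^{\mathrm{ref}}_{k,v}$ is an elementary conditional expectation fixed by the observed law of $(y,a)$; $d_k^{\mathrm{ref}}(y;v)$ and $M_k^{\mathrm{ref}}(y;t)$ are then deterministic functions of these prototypes and the fixed $P_k$, the margin via a minimum over finitely many values. Your explicit ratio formula and the remark on $|\mathcal A_k|=1$ add rigor the paper leaves implicit. One small caveat: your aside about an estimated $P_k$ would need second moments of $y$, not just integrability, though this is harmless here since the features are unit-normalized.
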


See \Cref{app:well-definedness} for the proof. Thus, missing target samples prevent full conditional evaluation, but not all attribute-level evaluation. Reference coverage identifies an anchored comparator.

\emph{\textbf{Remark}}:
The implemented score uses pooled prototypes
\(\eta_{k,v}:=\mathbb E[y\mid a_k=v]\) rather than
\(\mu^{\mathrm{ref}}_{k,v}\), because pooling uses all real samples with
\(a_k=v\) and is more sample-efficient. \Cref{app:pooled-reference-prototypes}
gives sufficient conditions under which the pooled comparator makes the same
attribute-wise decisions as the reference-anchored comparator, and
\Cref{app:perturbation-bound-empirical} reports the corresponding empirical
agreement.

\looseness=-1
Taken together, these results clarify what incomplete conditional support can and cannot identify. Missing joint targets rule out certification of full conditional fidelity without additional assumptions. Reference coverage provides a cardinality-minimal support condition for unconfounded attribute-level comparisons, under which the reference-anchored comparator is point-identified. The implemented pooled score is a sample-efficient surrogate: the appendix gives sufficient conditions for agreement with the identified comparator and empirically verifies this agreement. Additionally, \Cref{app:full-feature-discriminants} shows that the comparator can still be evaluated in the full feature space without requiring a disentangled subspace, as the decisions depend only on prototype-difference discriminant directions.

\section{Experiments}
\label{sec:experiments}
We evaluate the trust score in the regime it is designed for: the scorer is calibrated on real training-support features and never uses real samples from the requested target condition at deployment time. See~\Cref{app:impl-details} for the implementation details.

\paragraph{Datasets and shift regimes.}
We choose the fluorescent microscopy images dataset \rxrx~\citep{rxrx1sypetkowski2023} as the main scientific benchmark. It contains 4 cell types and 1138 siRNA perturbations, yielding 4552 cell-type$\times$perturbation attribute combinations (\ie, conditions). 
After filtering by support to obtain enough real samples for reliable evaluation, we obtain a test set of 25 conditions unseen at training time and 25 seen ones; full construction details are in \Cref{app:rxrx1-subset}.
%
\looseness=-1
For a more controlled setting we used a subset of \celeba~\citep{liu2015celeba} which is a compositional testbed with 4 binary attributes (Male, Smiling, Blond\_Hair, Eyeglasses), yielding $16$ conditions. In this specific setting, training sees only the all-zero reference and the four single-attribute conditions, while all $16$ conditions are queried at generation time. Results for models trained with the full conditional coverage are reported in \Cref{app:scorer-ablation}.

\paragraph{Models and scoring spaces.}
For each dataset and support regime, we train a vanilla conditional diffusion model (SiT-B/2 \cite{ma2024sit}) and two REPA~\citep{yu2025representation} variants aligned to DINOv3~\citep{simeoni2025dinov3} or SigLIP~\citep{zhai2023siglip}. The vanilla model is the basic deployment target of the trust score, while the REPA variants test transfer across diffusion models and provide a natural baseline for model-internal scoring due to their alignment training objective (\Cref{sec:during-gen}). Post-generation trust scores are computed in the \fext representation space: DINOv3 for vanilla models and the corresponding teacher representation for REPA models. During generation, scores are computed using a \transl that aligns the diffusion model features at a given layer, to SigLIP features. Additionally, we show a negative result using OpenPhenom~\citep{ophenomkraus2024} features, due to representation collapse in \Cref{app:openphenom-diagnostic}

\paragraph{Validation metrics.}
Generation quality is measured via KID, which assumes access to the real samples from the unseen target conditioning as ground truth, as opposed to our trust score.
\[
\Delta\mathrm{KID}
=
\mathrm{KID}(\mathrm{gen},\mathrm{real}_A)-\mathrm{KID}(\mathrm{real}_B,\mathrm{real}_A),
\]
always evaluated in a fixed DINOv3 mean-patch feature space. On \rxrx, we additionally validate trust selection in CellProfiler (CP)~\citep{cellprofilercarpenter2006} morphology space, which provides a biologically interpretable per-cell measurements independent of DINOv3.


\subsection{Post-generation trust as data curation}
\label{sec:trust-ordering}

We first evaluate the score in the post-generation setting. The central question is whether a score calibrated only on real training-support data can curate generated samples for requested conditions whose target distribution is unavailable at scoring time.

\subsubsection{Real-data-calibrated filtering and condition ranking}
\label{sec:fpr95}

Following the convention commonly used in the out-of-distribution detection literature with the FPR95 metric~\citep{sun2022out, demirel2026outofdistribution, muller2025mahalanobis, liu2020energy}, we instantiate the abstention rule from \Cref{sec:abstention-method} by setting the acceptance threshold at the 95th percentile of real-sample trust scores and comparing the accepted subset against a condition-matched random subset of the same size. \Cref{tab:fpr95-selection} compares the $\Delta$KID of selected generated samples based on trust score (denoted as $\Delta$KID$_{trust}$) against a condition-matched random subset of generated samples of the same size (denoted as $\Delta$KID$_{baseline}$). It reports the percentage of generated samples that pass the FP95-real trust threshold, and the per-condition Spearman correlation between mean trust score and \(\Delta\)KID.


\begin{table}[t]
\centering
\caption{Support-shift post-generation results. P95-real-threshold sample selection uses a threshold set at the 95th percentile of real-sample trust scores; $\Delta$KID is measured in DINOv3 space and compared to a condition-matched random subset. $\Delta$\% is the relative $\Delta$KID improvement of trust-selected over random baseline (positive is better). $\rho(\trust)$ is the condition-level Spearman correlation between mean trust and $\Delta$KID. Component correlations and full-support rows are in \Cref{app:detailed_results}.}
\label{tab:fpr95-selection}
\small
\begin{tabular}{c l cccc c}
\rowcolor{gray!20}\toprule
Dataset & Model & Accept\% & $\Delta$KID$_{\text{trust}}$$\downarrow$ & $\Delta$KID$_{\text{baseline}}$$\downarrow$ & $\Delta$\%$\uparrow$ & $\rho(\trust)$$\uparrow$ \\
\midrule
\multirow{3}{*}{\celeba}
 & Vanilla          & 58.7 & 0.224{\tiny$\pm$.009} & 0.368{\tiny$\pm$.020} & $+$39.1 & 0.96 \\
 & REPA (DINOv3)    & 55.4 & 0.228{\tiny$\pm$.010} & 0.401{\tiny$\pm$.031} & $+$43.1 & 0.96 \\
 & REPA (SigLIP)    & 56.1 & 0.239{\tiny$\pm$.018} & 0.423{\tiny$\pm$.020} & $+$43.6 & 0.96 \\
\midrule
\multirow{3}{*}{\rxrx}
 & Vanilla          &  4.4 & 0.196{\tiny$\pm$.006} & 0.323{\tiny$\pm$.013} & $+$39.4 & 0.90 \\
 & REPA (DINOv3)    &  5.5 & 0.197{\tiny$\pm$.006} & 0.333{\tiny$\pm$.019} & $+$40.8 & 0.90 \\
 & REPA (SigLIP)    &  6.1 & 0.181{\tiny$\pm$.003} & 0.312{\tiny$\pm$.019} & $+$42.0 & 0.87 \\
\bottomrule
\end{tabular}
\end{table}

\looseness=-1
The trust score consistently selects better samples under support shift. On \celeba, the selected subsets improve $\Delta$KID by 39--44\% while retaining roughly 55--59\% of generated samples. On \rxrx, selection is much more stringent, accepting only 4--6\% of samples, but the $\Delta$KID improvement remains strong at 39--42\%. Beyond sample selection, the same table shows that per-condition mean trust provides a strong ranking of condition difficulty in both domains. Accordingly, we report the full realism/faithfulness decomposition in \Cref{app:detailed_results} and defer the head-to-head scorer ablation to \Cref{app:scorer-ablation}.

\noindent\textbf{Takeaway.} Our score provides a practical support-shift filter and a useful condition-level ordering without using target real samples at scoring time.

\subsubsection{Trust rankings track sample quality and downstream utility}
\label{sec:downstream-ordering}

We next show how our score can be employed as a full ranker, rather than a filterer. We partition generated samples into class-balanced trust deciles, train a condition classifier on each decile separately, and evaluate on held-out real data. \Cref{fig:celeba-decile} shows the controlled CelebA held-out worse trust deciles have worse $\Delta$KID and yield less useful training data for downstream condition classification.

\begin{figure}[t]
\centering
\begin{subfigure}[t]{0.48\textwidth}
    \centering
    \includegraphics[width=\textwidth]{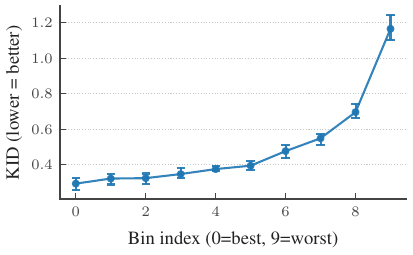}
    \caption{CelebA $\Delta$KID by trust decile.}
    \label{fig:celeba-decile-kid}
\end{subfigure}
\hfill
\begin{subfigure}[t]{0.48\textwidth}
    \centering
    \includegraphics[width=\textwidth]{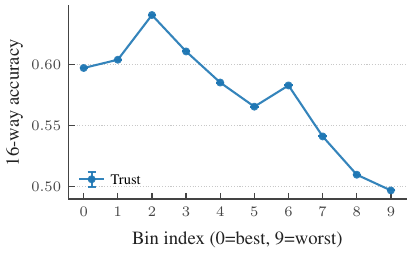}
    \caption{CelebA downstream 16-way accuracy.}
    \label{fig:celeba-decile-downstream}
\end{subfigure}
\caption{\textbf{CelebA decile binning (REPA-DINOv3 held-out, DINOv3 scoring).} $\Delta$KID increases monotonically from bin~0 (best trust) to bin~9 (worst) (\emph{left}), and correlates with downstream classification accuracy drops (\emph{right}). Binning results with faithfulness and realism components, together with RxRx1 DINOv3 decile curves are reported in \Cref{app:detailed_results}}
\label{fig:celeba-decile}
\end{figure}

On CelebA, $\Delta$KID degrades by roughly $4\times$ from the best to worst decile, while downstream accuracy decreases by about 10\% when training on low-quality samples according to trust. This confirms that the score is ordinally meaningful in a setting where the support shift is controlled by design. In the more realistic scientific setting, the analogous \rxrx DINOv3-space decile curves show the same ordering for KID and downstream classification; we report them in \Cref{fig:rxrx1-decile} of \Cref{app:detailed_results}. 

\noindent\textbf{Takeaway.} Across all tested settings, trust score induces a sample-level ordering that tracks both image quality and downstream utility, rather than simply serving as a thresholding rule.

\subsubsection{Biological validation with CellProfiler morphology}
\label{sec:cp-validation}
To test whether sample selection based on trust score captures biologically meaningful morphology, we evaluate \rxrx generations in CellProfiler \cite{cellprofilercarpenter2006} (CP) feature space: interpretable per-cell measurements computed directly from segmented images. We use 621 CP features after variance thresholding, and a real-data $|z|\!\leq\!5$ outlier filter, then standardize on real data. The full CP pipeline is in \Cref{app:cp-validation}.



\paragraph{CP-space decile utility.}
We repeat the decile downstream experiment of Figure \ref{fig:celeba-aligned-kid}, on \rxrx, using CP feature space as opposed to DINOv3. \Cref{fig:cp-decile-repa-siglip-main} shows results with the REPA-SigLIP diffusion model: better trust deciles yield better classifiers for both cell type classification (4 classes) and cell-type$\times$perturbation targets (50 classes in total) show a similar trend. The same pattern holds for all three marginal generators and both targets in \Cref{app:cp-validation}.
\begin{figure}[t]
\centering
\includegraphics[width=\textwidth]{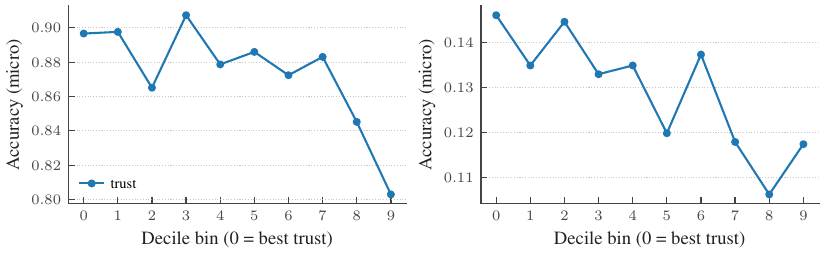}
\caption{\textbf{Main RxRx1 CellProfiler validation (REPA-SigLIP marginal, SigLIP trust scoring).} CP-space downstream classification by trust decile. \emph{Left}: 4-way cell-type accuracy. \emph{Right}: 50-way condition accuracy. Trust-ranked deciles show a clear correlation with the classification performance, showing that trust ordering improves utility in an interpretable morphology space independent of the DINOv3 validation encoder.}
\label{fig:cp-decile-repa-siglip-main}
\end{figure}
%
%
%
%
%
%
%
%
\paragraph{Real-spread-normalized centroid distance.}
For each generated sample at condition $(c,s)$, we compute its CellProfiler distance to the real-condition centroid and standardize it by real-real variation:
\[
 z(x)=
 \frac{
 \lVert \mathrm{CP}(x)-\mu_{\mathrm{real},c,s}\rVert_2
 -
 \mu_{\mathrm{RR}}(c,s)
 }{
 \sigma_{\mathrm{RR}}(c,s)
 }.
\]
Here $\mu_{\mathrm{RR}}(c,s)$ and $\sigma_{\mathrm{RR}}(c,s)$ are estimated from random half-splits of real samples within the same condition. Thus $z=0$ corresponds to typical real-real morphology spread, while $z>0$ indicates excess deviation from the real morphology centroid.


\Cref{tab:cp-centroid} shows that trust selection improves CellProfiler morphology alignment in all six model/split settings, with bootstrap confidence intervals strictly below zero (\Cref{app:cp-validation}, \Cref{tab:cp-centroid-full}). The seen split is nearly real-like after selection: REPA-DINOv3 trust-selected samples are only $0.024\sigma$ beyond typical real morphology. On unseen conditions, generated samples drift farther from the real distribution, but trust selection still removes 18--29\% of the standardized excess distance.

\begin{wraptable}[13]{r}{0.64\textwidth}
\vspace{-1.0em}
\centering
\caption{Real-spread-normalized CP morphology distance on RxRx1 (kept-621 CP features). $z(x)$ is the distance to the matched real-condition centroid, standardized by its real-real morphology variation. Accept\% is the share of generations passing the P95-real threshold. Bootstrap 95\% CIs are in \Cref{tab:cp-centroid-full}.}
\label{tab:cp-centroid}
\small
\begin{adjustbox}{width=\linewidth}
\begin{tabular}{ll c ccc}
\rowcolor{gray!20}\toprule
Model & Split & Accept\% & $\bar z_{\mathrm{trust}}$$\downarrow$ & $\bar z_{\mathrm{baseline}}$$\downarrow$ & $\Delta \%$$\uparrow$ \\
\midrule
Vanilla       & seen   & 66.3 & 0.055 & 0.111 & +50 \\
Vanilla       & unseen & 21.9 & 0.583 & 0.712 & +18 \\
REPA (DINOv3) & seen   & 61.2 & 0.024 & 0.137 & +83 \\
REPA (DINOv3) & unseen & 21.0 & 0.535 & 0.755 & +29 \\
REPA (SigLIP) & seen   & 65.7 & 0.032 & 0.126 & +75 \\
REPA (SigLIP) & unseen & 23.5 & 0.827 & 1.020 & +19 \\
\bottomrule
\end{tabular}
\end{adjustbox}
\end{wraptable}

We also provide a per-feature CP breakdown in \Cref{app:cp-validation}, where gains concentrate on biologically meaningful texture and shape axes: per-cell texture autocorrelation, image-level granularity, total cell-segmentation area, and cytoplasm shape regularity.

\noindent\textbf{Takeaway.} On \rxrx, \trust{}-selected samples are closer to real ones in both calibrated CellProfiler morphology space and under the learned DINOv3 validation encoder. Since the metric accounts for within-condition real-real variation, gains directly reflect reductions in excess morphology deviation beyond normal biological spread.

\subsection{During generation}
\label{sec:during-gen}
In this section, we investigate the extent to which the score \trust{} can be evaluated at generation time before decoding so that low confidence images are never generated, directly mapping diffusion-model representations into the \fext space (\Cref{par:during-generation}).
%
 We train a lightweight \emph{\transl} for this mapping, with an objective that preserves the shared-covariance Mahalanobis geometry the score relies on, allowing internal-feature trust scoring without the VAE decoder or \fext.
\begin{wrapfigure}[15]{r}{0.48\textwidth}
\vspace{-1em}
    \centering
    \includegraphics[width=\linewidth]{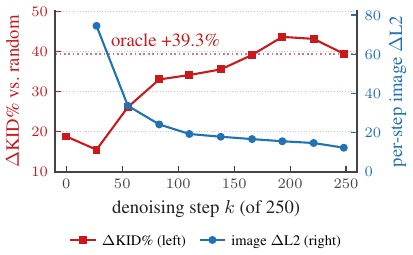}
    \caption{\textbf{\Transl scoring across denoising (\celeba, Vanilla SiT-B/2, 250 steps).} P95-real-threshold $\Delta$KID improvement rises as the predicted-clean trajectory settles.}
    \label{fig:celeba-timestep-fine}
\end{wrapfigure}
\Cref{fig:celeba-timestep-fine} evaluates the \transl at intermediate denoising steps on \celeba held-out (250-step sampler). At $k\!=\!83$ the filter already reaches $33\%$ $\Delta$KID improvement while saving $67\%$ of denoising compute; by $k\!\approx\!166$ it matches the post-generation oracle while still saving $34\%$. \rxrx figure, full-support rows, decile-binning confirmation, and per-step numbers are in \Cref{app:timestep-rxrx1}, \Cref{app:during-gen-full}, \Cref{app:detailed_results}, and \Cref{app:timestep-fine}.

\looseness=-1
\Cref{tab:aligned-combined} reports the support-shift held-out result. The \transl gives $25$--$34\%$ $\Delta$KID improvements on \rxrx with $\rho(\trust)$ up to $0.88$, and matches post-generation filtering on \celeba. Here, REPA-aligned features serve as a natural baseline: REPA training aligns internal states to a pretrained teacher as part of the generator's loss, so any REPA-trained generator provides a \transl at no additional cost. This baseline partly preserves the signal on \celeba ($\rho(\trust)=0.95$, $+31\%$ $\Delta$KID) but fails on \rxrx, with selection worse than random and $\rho(\trust)\!\approx\!0$.Cosine distance, as used in REPA, aligns internal states to the extractor features without preserving the covariance structure needed by the Mahalanobis trust score. Thus, REPA features are not a reliable geometry for trust scoring.

\begin{table}[t]
\centering
\caption{\textbf{Trust scoring during generation}.
Same setup as \Cref{tab:fpr95-selection}
but the score here is evaluated from internal features mapped by a \transl instead of post-generation SigLIP ones. The joint-aligned REPA baseline partly works on \celeba but fails on \rxrx; our \transl restores filtering and ranking on both.}
\label{tab:aligned-combined}
\small
\begin{adjustbox}{width=0.95\textwidth}
\begin{tabular}{cll cccc c}
\rowcolor{gray!20}\toprule
& Feature source & Model & Accept\% & $\Delta$KID$_\text{trust}$$\downarrow$ & $\Delta$KID$_\text{baseline}$$\downarrow$ & $\Delta$\%$\uparrow$ & $\rho(\trust)$$\uparrow$ \\
\midrule
\multirow{5}{*}{\rotatebox[origin=c]{90}{\celeba}}
& REPA  & REPA (DINOv3) & 62.7 & 0.278{\tiny$\pm$.015} & 0.404{\tiny$\pm$.021} & $+$31.2 & 0.95 \\
& REPA  & REPA (SigLIP) & 66.7 & 0.293{\tiny$\pm$.024} & 0.425{\tiny$\pm$.018} & $+$31.1 & 0.95 \\
& \transl & Vanilla       & 23.6 & 0.195{\tiny$\pm$.008} & 0.378{\tiny$\pm$.023} & $+$48.4 & 0.83 \\
& \transl & REPA (DINOv3) & 48.4 & 0.250{\tiny$\pm$.016} & 0.397{\tiny$\pm$.017} & $+$37.2 & 0.92 \\
& \transl & REPA (SigLIP) & 17.2 & 0.308{\tiny$\pm$.015} & 0.409{\tiny$\pm$.015} & $+$24.8 & 0.87 \\
\midrule
\multirow{5}{*}{\rotatebox[origin=c]{90}{\rxrx}}
& REPA  & REPA (DINOv3) &  1.4 & 0.537{\tiny$\pm$.032} & 0.329{\tiny$\pm$.037} & $-$63.4 & 0.09 \\
& REPA  & REPA (SigLIP) &  3.9 & 0.564{\tiny$\pm$.022} & 0.293{\tiny$\pm$.023} & $-$92.3 & 0.13 \\
& \transl & Vanilla       & 38.4 & 0.260{\tiny$\pm$.003} & 0.346{\tiny$\pm$.016} & $+$24.9 & 0.78 \\
& \transl & REPA (DINOv3) & 62.3 & 0.246{\tiny$\pm$.003} & 0.372{\tiny$\pm$.037} & $+$33.8 & 0.88 \\
& \transl & REPA (SigLIP) & 60.5 & 0.235{\tiny$\pm$.011} & 0.329{\tiny$\pm$.018} & $+$28.7 & 0.70 \\
\bottomrule
\end{tabular}
\end{adjustbox}
\end{table}

\noindent\textbf{Takeaway.} Preserving the trust geometry during generation enables early scoring; the same map further allows the score to be evaluated well before final decoding, increasing computational benefits.


\section{Conclusion}

\looseness=-1In this work, we studied per-sample quality assessment for conditional generation under compositional shift, where real samples from the requested target condition are unavailable. We showed that full conditional fidelity to a missing target distribution is not identifiable from the observed training distribution alone, but that global realism and attribute-wise faithfulness remain estimable.
We introduced a post-hoc trust score combining a pooled Mahalanobis realism term with shared-covariance attribute margins. Under reference coverage, reference-anchored attribute comparisons are identifiable. Empirically, the score supports filtering, ranking, and synthetic-data curation, improving KID, downstream utility, and RxRx1 CellProfiler morphology alignment.
The same real-data geometry can also be used during generation by mapping denoising states into the pretrained-encoder feature space, enabling early abstention when the trust geometry is preserved.
A limitation of the current theory is that it focuses on discrete attributes under sufficient coverage conditions. Future work should extend the framework to continuous conditioning variables, characterize weaker identifiability conditions, and use the score to drift generations toward higher-trust regions rather than only abstaining from poor ones.

\section*{Acknowledgements}
This work was supported by the Chan Zuckerberg Initiative (CZI) through its AI Residency Program. We are grateful to CZI for the opportunity to take part in the program, and to the CZI AI Infrastructure Team for providing support with the GPU cluster used to train our models. MF is supported by the MSCA IST-Bridge fellowship which has received funding from the European Union’s Horizon 2020 research and innovation program under the Marie Skłodowska-Curie grant agreement No 101034413.

\bibliographystyle{plain}
\bibliography{main}

\newpage
\appendix
\section{Model Pipeline}
\label{app:model-pipeline}
Figure~\ref{fig:mod_pipeline} contrasts post-generation and during-generation trust scoring.
Post-generation scoring first completes denoising, decodes the final latent into an image, and applies the feature extractor $\Phi$ before evaluating the trust score.
During-generation scoring instead maps an intermediate diffusion representation $h_t$ into the same $\Phi$-compatible space using a learned translator $T$, bypassing decoding and feature extraction.
Thus, both settings use the same calibrated trust geometry, while during-generation scoring enables assessment before full generation is completed.

\begin{figure}[h!]
    \centering
    \includegraphics[width=\linewidth]{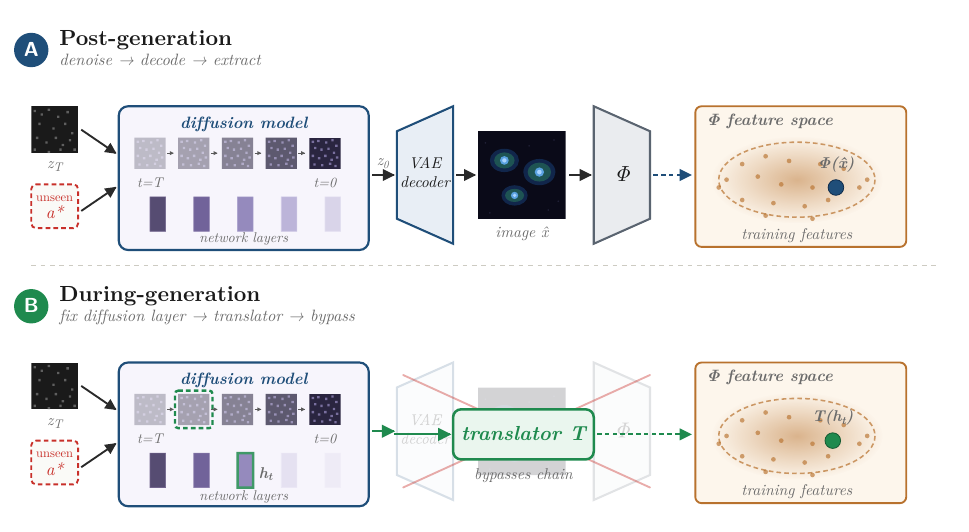}
\caption{
\emph{Post-generation and during-generation trust scoring.}
In the post-generation setting, the diffusion model is run to completion, the final latent is decoded into an image, and the feature extractor $\Phi$ maps the generated sample into the calibrated feature space where the trust score is evaluated.
In the during-generation setting, scoring bypasses decoding and feature extraction: an intermediate denoising representation $h_t$ from a fixed diffusion layer, and selected on a possibly early diffusion timestep $t$,  is mapped by a learned translator $T$ into the same $\Phi$-compatible feature space.
Both pipelines therefore use the same real-data-calibrated trust geometry, but during-generation scoring enables early assessment and abstention before full generation is completed.}
\label{fig:mod_pipeline}
\end{figure}

\section{Additional theoretical details}
\label{app:theory-details}

\subsection{Minimality for unconfounded attribute coverage}
\label{app:reference-coverage-minimal-proof}

\begin{proposition}[Reference coverage is minimal]
\label{prop:reference-coverage-minimal}
Let \(S\subseteq \mathcal A_1\times\cdots\times\mathcal A_K\) be an observed support.
Define the one-attribute contrast graph \(G_S\) to have vertices \(a\in S\), with an
edge between two vertices if they differ in exactly one attribute. Without structural
assumptions on \(\mathbb E[y\mid a]\), suppose \(S\) permits unconfounded identification
of every attribute value effect, in the sense that \(G_S\) contains a connected subgraph
whose vertices cover every value in each \(\mathcal A_k\). Then
\[
|S|
\ge
1+\sum_{k=1}^K(|\mathcal A_k|-1)
=
N_{\mathrm{ref}}.
\]
Reference coverage attains this bound, since it contains exactly \(N_{\mathrm{ref}}\)
joint conditionings. Hence reference coverage is cardinality-minimal.
\end{proposition}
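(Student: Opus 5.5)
The plan is to prove the lower bound by a spanning-tree edge-counting argument on the connected subgraph guaranteed by the hypothesis, and then to check attainment by counting the reference-coverage conditions directly.

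\textbf{Lower bound.} Let \(H\subseteq G_S\) be a connected subgraph whose vertex set \(V(H)\) covers every value of each \(\mathcal A_k\). Fix a spanning tree \(\mathcal T\) of \(H\). Since \(\mathcal T\) is a tree, it has exactly \(|V(H)|-1\) edges, and each edge changes exactly one coordinate.

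For each \(k\), let \(\mathcal V_k=\{a_k : a\in V(H)\}\subseteq\mathcal A_k\) be the set of \(k\)-th values that appear on the vertices. By the covering hypothesis, \(\mathcal V_k=\mathcal A_k\).

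Start at any root of \(\mathcal T\) and grow the tree one edge at a time, always adding an edge that attaches a new vertex. Adding an edge that changes coordinate \(k\) introduces at most one new value of attribute \(k\) to the visited set, and no new values of any other attribute. Initially exactly one value per attribute has been visited. At the end, all \(|\mathcal A_k|\) values of every attribute must have been visited.

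Hence the number of edges changing coordinate \(k\) is at least \(|\mathcal A_k|-1\). Summing over \(k\) gives
\[
|V(H)|-1\ \ge\ \sum_{k=1}^K(|\mathcal A_k|-1).
\]
Since \(|S|\ge|V(H)|\), this yields \(|S|\ge N_{\mathrm{ref}}\).

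\textbf{Attainment.} Take the minimal reference-coverage support
\[
S_{\mathrm{ref}}=\{\bar a\}\cup\{(v,\bar a_{-k}) : k\in\{1,\dots,K\},\ v\neq\bar a_k\}.
\]
These conditions are pairwise distinct, because each non-reference element differs from \(\bar a\) in exactly one coordinate, and the coordinate and value can be read off from it. So \(|S_{\mathrm{ref}}|=1+\sum_k(|\mathcal A_k|-1)=N_{\mathrm{ref}}\).

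Every non-reference vertex is adjacent to \(\bar a\) in \(G_{S_{\mathrm{ref}}}\), so the graph contains a star centred at \(\bar a\). This star is connected, and it covers every value of each attribute, since value \(v\) of attribute \(k\) appears either at \(\bar a\) (if \(v=\bar a_k\)) or at \((v,\bar a_{-k})\). By Definition~\ref{def:reference-coverage}, any support with reference coverage contains \(S_{\mathrm{ref}}\), so the minimal such support has exactly \(N_{\mathrm{ref}}\) conditions. It therefore meets the bound and is cardinality-minimal.

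\textbf{Main obstacle.} The only delicate step is the counting of new values per edge. It relies on growing the tree in an order where each step attaches exactly one new vertex, so that each edge is charged to exactly one coordinate. Any BFS ordering of \(\mathcal T\) provides this, and the argument needs nothing further.
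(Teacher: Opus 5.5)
Your proof is correct and follows essentially the same route as the paper. Both arguments take a spanning tree of the covering connected subgraph and note that the root supplies one value per attribute. Each further vertex differs from its parent in one coordinate, so it adds at most one new value; attainment then comes from the star-shaped reference construction. Charging tree edges per coordinate, and checking explicitly that the reference conditions are distinct and contained in any reference-coverage support, tighten that same argument rather than change it.
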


\begin{proof}[Proof of \Cref{prop:reference-coverage-minimal}]
Let
\[
m(a):=\mathbb E[y\mid a]
\]
denote the conditional mean at attribute configuration \(a\). With no structural
assumptions, \(m(a)\) is an arbitrary function of the full joint attribute
configuration.

A model-free contrast can be attributed to a single attribute only when the two
conditions being compared differ in that attribute alone. If two observed
conditions differ in multiple attributes, then the resulting mean difference can
be decomposed in multiple ways among the changed attributes and their
interactions. Thus, without structural assumptions, such a contrast is support
confounded.

Define the one-attribute contrast graph \(G_S\) of an observed support
\(S\subseteq \mathcal A_1\times\cdots\times\mathcal A_K\) as follows. The
vertices are the observed joint conditions \(a\in S\). Two vertices are connected
by an edge labeled \(k\) if they differ only in coordinate \(k\). Such an edge
gives an unconfounded contrast for attribute \(k\).

Suppose an observed support identifies unconfounded effects for all attribute
values. Then its one-attribute contrast graph must contain a connected subgraph
whose vertices cover every attribute value: for every attribute \(k\) and every
value \(v\in\mathcal A_k\), some vertex in the subgraph must satisfy \(a_k=v\).
Otherwise, some value would either be unobserved in any one-attribute comparison
or would be observed only through joint changes with other attributes, leaving
its individual contribution non-identifiable.

Now consider any connected subgraph \(T\) of \(G_S\) that covers all attribute
values. Let \(R\) be a spanning tree of \(T\), and choose an arbitrary root
condition \(a^{(0)}\). The root covers exactly one value of each attribute, hence
\(K\) attribute values in total. Each additional vertex in the tree differs from
its parent in exactly one coordinate, so it can introduce at most one previously
unseen attribute value. The total number of attribute values that must be covered
is
\[
\sum_{k=1}^K |\mathcal A_k|.
\]
After the root, the number of remaining attribute values that must be introduced
is therefore
\[
\sum_{k=1}^K |\mathcal A_k| - K
=
\sum_{k=1}^K (|\mathcal A_k|-1).
\]
Since each new observed joint condition can introduce at most one such value,
any support that identifies all attribute values through unconfounded
one-attribute contrasts must contain at least
\[
1+\sum_{k=1}^K (|\mathcal A_k|-1)
=
N_{\mathrm{ref}}
\]
observed joint conditions.

Reference coverage attains this lower bound. Indeed, for a reference condition
\(\bar a\), it observes the baseline condition \(\bar a\) and, for every
attribute \(k\) and every non-reference value \(v\neq \bar a_k\), the
one-attribute deviation
\[
(v,\bar a_{-k}).
\]
Thus it contains exactly
\[
1+\sum_{k=1}^K (|\mathcal A_k|-1)
=
N_{\mathrm{ref}}
\]
joint conditions. These conditions form a star-shaped one-attribute contrast
graph centered at \(\bar a\), and the graph covers every attribute value.
Therefore reference coverage is cardinality-minimal.

This minimal design is not unique: other supports with \(N_{\mathrm{ref}}\)
conditions may also cover all attribute values through unconfounded
one-attribute contrasts. Reference coverage is the shared-context construction
that attains the minimum.
\end{proof}

\subsection{Support confounding without reference coverage}

\label{app:support-confounding}
\begin{proposition}[Support confounding destroys attribute attribution]
\label{prop:support-confounding}
There exist observed supports for which attribute-specific contributions are not determined by the observed distribution. In particular, with four binary attributes and observed support
\[
S=\{0000,1111\},
\]
the observed data distribution determines only the joint contrast between \(0000\) and \(1111\); it does not determine any unique decomposition of that contrast into attribute-specific contributions.
\end{proposition}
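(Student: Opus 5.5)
The proof goes by exhibiting two explicit data-generating models that agree on the observed distribution restricted to \(S=\{0000,1111\}\) but assign different attribute-specific contributions. The non-identifiability claim is existential, so one such pair suffices. The key object is the conditional mean \(m(a)=\mathbb E[y\mid a]\). As in the proof of \Cref{prop:reference-coverage-minimal}, we work without structural assumptions, so \(m\) may be any function on \(\{0,1\}^4\). The observed law of \((y,a)\) on \(S\) fixes only the conditional distributions of \(y\) at \(a=0000\) and at \(a=1111\), together with the mixing weights \(\mathbb P(a=0000)\) and \(\mathbb P(a=1111)\). In particular the data determine only \(m(0000)\), \(m(1111)\), and hence the joint contrast \(\Delta:=m(1111)-m(0000)\).

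Step 1 is to define what an attribute-specific contribution of attribute \(k\) means. The natural choice is the one-attribute contrast \(\delta_k:=m(e_k)-m(0000)\), where \(e_k\) is the \(k\)-th unit vector. Equivalently, one can use any additive decomposition \(\Delta=\sum_k c_k\) obtained from a model \(m(a)=m(0000)+\sum_k a_k c_k+\text{(interactions)}\).

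Step 2 is the construction. Fix any observed law, with conditional distributions \(Q_0\) at \(0000\) and \(Q_1\) at \(1111\). Take any vector \(w\neq0\). Model A puts \(y\mid a\sim Q_1\) whenever \(a_1=1\), and \(Q_0\) otherwise. Model B does the same but with \(a_2\) in place of \(a_1\). On \(S\) the two models give the same law. Yet in Model A, \(\delta_1=\Delta\) and \(\delta_2=\delta_3=\delta_4=0\), while in Model B, \(\delta_2=\Delta\) and all other \(\delta_k=0\). If \(\Delta\neq0\), these decompositions differ. To also cover the degenerate case \(\Delta=0\), use the additive model \(m(a)=m(0000)+\sum_k a_k c_k\) with \(c=(w,-w,0,0)\) against \(c=0\). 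For general \(\Delta\), more generally, every vector \((c_1,\dots,c_4)\) with \(\sum_k c_k=\Delta\) is realized by the additive model \(m(a)=m(0000)+\sum_k a_k c_k\), with distribution \(y\mid a\) equal to \(Q_0\) shifted by \(m(a)-m(0000)\) off \(S\) and \(Q_0,Q_1\) on \(S\). So the observed distribution is compatible with a continuum of decompositions.

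Step 3 concludes that no functional of the observed distribution can equal the attribute contribution \(\delta_k\) under every compatible model. Hence attribution is not determined.

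The main obstacle is being precise that the alternative models genuinely reproduce the full observed joint law of \((y,a)\), and not just the means. The way to handle this is to specify the conditional distributions at unobserved cells freely, since those cells carry zero probability mass and therefore do not affect the observed law, while keeping the marginal of \(a\) supported on \(S\) and the conditionals at \(S\) unchanged. This is the same mechanism that underlies \Cref{prop:missing-target-informal}, so I would phrase the argument as an instance of it.
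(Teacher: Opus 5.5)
Your proposal is correct and follows essentially the same route as the paper. Both arguments exhibit two models that agree on the observed support \(S=\{0000,1111\}\) but assign different contributions to some attribute, because only the sum of the contributions is pinned down; the paper does this through an additive mean representation, giving all of \(m_1-m_0\) to attribute 1 versus splitting it equally across the four attributes. Your version is slightly more careful on two points: it matches the full joint law of \((y,a)\) rather than only the conditional means, and it covers the degenerate case \(\Delta=0\) via \(c=(w,-w,0,0)\) versus \(c=0\), which matters because in that case the paper's two models coincide.
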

\begin{proof}[Proof of \Cref{prop:support-confounding}]
It suffices to consider a scalar feature coordinate. Let the observed support be
\[
S=\{0000,1111\},
\]
and write
\[
m_0:=\mathbb{E}[y\mid a=0000],
\qquad
m_1:=\mathbb{E}[y\mid a=1111].
\]
The observed data distribution determines \(m_0\) and \(m_1\), hence only the joint difference \(m_1-m_0\).

Now consider an additive representation on the observed support,
\[
\mathbb{E}[y\mid a]
=
\beta_0+\sum_{j=1}^4 g_j(a_j).
\]
On the two observed cells, this implies
\[
m_1-m_0
=
\sum_{j=1}^4 \big(g_j(1)-g_j(0)\big).
\]
Only the sum is determined; the individual differences are not.

For example, the following two choices agree on the observed support but assign different attribute-1 contributions.

\paragraph{Model A.}
Set
\[
g_1(1)-g_1(0)=m_1-m_0,
\qquad
g_j(1)-g_j(0)=0 \quad (j=2,3,4).
\]

\paragraph{Model B.}
Set
\[
g_j(1)-g_j(0)=\frac{m_1-m_0}{4}
\qquad (j=1,2,3,4).
\]

Both models reproduce exactly the same observed conditional means at \(0000\) and \(1111\), but they attribute different portions of the observed change to attribute \(1\). Hence the observed distribution on \(S\) does not determine any unique attribute-specific contribution. This is the support confounding claimed in the proposition.
\end{proof}

\subsection{Identification and well-definedness of anchored objects}
\label{app:well-definedness}

We collect here the basic support-side consequences used in the main text.

\begin{proof}[Proof of \Cref{prop:anchored-identification}]
Fix an attribute \(k\) and values \(u,v\in\mathcal{A}_k\). Under reference coverage,
\[
\mathbb{P}(a_k=v,\;a_{-k}=\bar a_{-k})>0
\qquad\text{and}\qquad
\mathbb{P}(a_k=u,\;a_{-k}=\bar a_{-k})>0.
\]
Hence the conditional expectations
\[
\mu^{\mathrm{ref}}_{k,v}
=
\mathbb{E}[y\mid a_k=v,\;a_{-k}=\bar a_{-k}]
\qquad\text{and}\qquad
\mu^{\mathrm{ref}}_{k,u}
=
\mathbb{E}[y\mid a_k=u,\;a_{-k}=\bar a_{-k}]
\]
are functions of the observed joint distribution of \((y,a)\) on positive-probability events, and are therefore point-identified whenever \(y\) is integrable. Their difference
\[
\Delta^{\mathrm{ref}}_{k}(v,u)=\mu^{\mathrm{ref}}_{k,v}-\mu^{\mathrm{ref}}_{k,u}
\]
is then point-identified as well. Since \(P_k\) is fixed, the oracle distances are deterministic functions of the identified prototypes. The oracle margin is a finite minimum over identified distances, and is therefore also identified.
\end{proof}

\begin{proposition}[Well-definedness under reference coverage]
\label{prop:well-definedness}
Assume the support has reference coverage with reference condition \(\bar a\). Then for every attribute \(k\in\{1,\dots,K\}\) and every value \(v\in\mathcal{A}_k\),
\[
\mathbb{P}(a_k=v,\;a_{-k}=\bar a_{-k})>0
\qquad\text{and}\qquad
\mathbb{P}(a_k=v)>0.
\]
Consequently, whenever \(y\) is integrable, both
\[
\mu^{\mathrm{ref}}_{k,v}:=\mathbb{E}[y\mid a_k=v,\;a_{-k}=\bar a_{-k}]
\qquad\text{and}\qquad
\eta_{k,v}:=\mathbb{E}[y\mid a_k=v]
\]
are well-defined. If in addition \(P_k\succ 0\), then for every \(y\in\mathbb{R}^d\),
\[
d_k^{\mathrm{ref}}(y;v):=(y-\mu^{\mathrm{ref}}_{k,v})^\top P_k (y-\mu^{\mathrm{ref}}_{k,v}),
\qquad
d_k(y;v):=(y-\eta_{k,v})^\top P_k (y-\eta_{k,v})
\]
and the corresponding margins
\[
M_k^{\mathrm{ref}}(y;t):=d_k^{\mathrm{ref}}(y;t)-\min_{v\neq t}d_k^{\mathrm{ref}}(y;v),
\qquad
M_k(y;t):=d_k(y;t)-\min_{v\neq t}d_k(y;v)
\]
are well-defined for all targets \(t\in\mathcal{A}_k\).
\end{proposition}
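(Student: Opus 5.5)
The argument is a direct unpacking of \Cref{def:reference-coverage}, followed by elementary facts about conditional expectations on positive-probability events. I will treat the two probability claims first, then the two prototypes, and finally the distances and margins, which are finite deterministic constructions.

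\emph{Step 1 (positivity).} Fix \(k\) and \(v\in\mathcal A_k\). Reference coverage gives \(\mathbb P(a_k=v,\,a_{-k}=\bar a_{-k})>0\) directly. The event \(\{a_k=v,\,a_{-k}=\bar a_{-k}\}\) is contained in \(\{a_k=v\}\), so monotonicity of probability gives \(\mathbb P(a_k=v)\ge \mathbb P(a_k=v,\,a_{-k}=\bar a_{-k})>0\).

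\emph{Step 2 (prototypes).} For any event \(B\) with \(\mathbb P(B)>0\) and integrable \(y\), the elementary conditional expectation \(\mathbb E[y\mid B]=\mathbb E[y\mathbf 1_B]/\mathbb P(B)\) is a well-defined vector in \(\mathbb R^d\). The numerator is finite because \(\|y\mathbf 1_B\|\le\|y\|\), and the denominator is nonzero by Step 1. Applying this with \(B=\{a_k=v,\,a_{-k}=\bar a_{-k}\}\) gives \(\mu^{\mathrm{ref}}_{k,v}\), and with \(B=\{a_k=v\}\) gives \(\eta_{k,v}\). Because the attribute spaces are finite, these are discrete conditioning events, so no regular-conditional-distribution subtleties arise. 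In the paper's setting \(y\) is normalized to the unit sphere, so integrability is automatic; I would note this in a remark.

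\emph{Step 3 (distances and margins).} Given finite prototypes and a fixed \(P_k\succ0\), each \(d_k^{\mathrm{ref}}(y;v)\) and \(d_k(y;v)\) is a finite, nonnegative real quadratic form for every \(y\in\mathbb R^d\). For a target \(t\), the set \(\mathcal A_k\setminus\{t\}\) is finite, so the minimum over \(v\neq t\) is attained and finite, and the margins are differences of finite reals. The only point needing care is the degenerate case \(|\mathcal A_k|=1\), where the competitor set is empty. I would handle it by assuming \(|\mathcal A_k|\ge2\), which is implicit in the definition of a margin, or by adopting the convention \(\min\emptyset=+\infty\) and stating it. I expect this edge case to be the only obstacle; the rest is routine.
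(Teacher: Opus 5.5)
Your proposal is correct and follows the paper's proof essentially step for step: positivity of the anchored cell from reference coverage and of \(\{a_k=v\}\) by monotonicity (the paper's ``marginalizing''), conditional expectations well-defined on positive-probability events for integrable \(y\), finite quadratic forms, and a minimum over a finite competitor set. Your flag about the degenerate case \(|\mathcal A_k|=1\), where the competitor set is empty, is a legitimate refinement that the paper's proof tacitly assumes away.
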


\begin{proof}
Reference coverage states directly that
\[
\mathbb{P}(a_k=v,\;a_{-k}=\bar a_{-k})>0
\]
for every \(k\) and \(v\). Marginalizing over \(a_{-k}\) then gives
\[
\mathbb{P}(a_k=v)\geq \mathbb{P}(a_k=v,\;a_{-k}=\bar a_{-k})>0.
\]
Hence both conditioning events have positive probability. Since \(y\) is integrable, the conditional expectations defining \(\mu^{\mathrm{ref}}_{k,v}\) and \(\eta_{k,v}\) are well-defined. If \(P_k\succ 0\), then the quadratic forms \(d_k^{\mathrm{ref}}(y;v)\) and \(d_k(y;v)\) are finite for every \(y\in\mathbb{R}^d\), and since \(\mathcal{A}_k\) is finite, the minima over \(v\neq t\) are attained and finite. Therefore the margins are well-defined.
\end{proof}

\begin{remark}[Identification versus estimation]
Reference coverage is a support-side condition ensuring that the fixed-context comparison
\(\mu^{\mathrm{ref}}_{k,v}\) is identified and well-defined. It does not by itself guarantee low-variance estimation in finite samples. Finite-sample issues enter through how many real samples populate each required slice, and are separate from the population-level argument.
\end{remark}

\subsection{Pooled and reference prototypes}
\label{app:pooled-reference-prototypes}
The main identifiability result concerns the reference-anchored prototypes
\(\mu^{\mathrm{ref}}_{k,v}\). The implemented score uses pooled prototypes
\(\eta_{k,v}\) because they are more sample-efficient: they use all real samples with \(a_k=v\), rather than only samples in the anchored cell
\((a_k=v,a_{-k}=\bar a_{-k})\). This appendix gives sufficient conditions under which this lower-variance pooled comparator agrees with the reference-anchored comparator.

\paragraph{Conditional-mean decomposition and pooling assumptions.}

The pooled prototype for attribute \(k\) and value \(v\) is
\[
\eta_{k,v}:=\mathbb{E}[y\mid a_k=v].
\]
For the pooled-reference analysis, write the conditional mean as
\[
\mathbb{E}[y\mid a_k=v,\;a_{-k}=b]
=
s_{k,v}+c_k(b)+\delta_{k,v}(b),
\]
where \(s_{k,v}\) is the value-associated signal, \(c_k(b)\) is the remaining-context contribution, and \(\delta_{k,v}(b)\) is a value-dependent interaction residual. This decomposition is only an analysis device; the score does not require estimating these terms.

We use the following two assumptions:
\[
\tag{A1}
\mathbb{E}[c_k(a_{-k})\mid a_k=v]=\bar c_k
\qquad
\text{for all }v\in\mathcal{A}_k,
\]
and
\[
\tag{A2}
\mathbb{E}[\delta_{k,v}(a_{-k})\mid a_k=v]=0
\qquad
\text{for all }v\in\mathcal{A}_k.
\]
A1 is the substantive context-balance condition: after pooling over observed contexts, the average non-\(k\) context contribution does not depend on the value of attribute \(k\). A2 is a centering convention rather than an additional support assumption: any value-specific average interaction bias can be absorbed into \(s_{k,v}\).

Taking expectation over \(a_{-k}\mid a_k=v\), we obtain
\[
\eta_{k,v}
=
s_{k,v}
+
\mathbb{E}[c_k(a_{-k})\mid a_k=v]
+
\mathbb{E}[\delta_{k,v}(a_{-k})\mid a_k=v].
\]
Under A1--A2, this reduces to
\[
\eta_{k,v}=s_{k,v}+\bar c_k.
\]
At the reference context,
\[
\mu^{\mathrm{ref}}_{k,v}
=
s_{k,v}
+
c_k(\bar a_{-k})
+
\delta_{k,v}(\bar a_{-k}).
\]
Thus pooled prototypes share the same value signal \(s_{k,v}\) as the reference prototypes, but differ by a shared context shift and a value-dependent reference leakage term. The next lemma makes this relation explicit.

\begin{lemma}[Pooled-reference decomposition]
\label{lem:pooled-reference-decomposition}
Under A1--A2,
\[
\eta_{k,v}
=
\mu^{\mathrm{ref}}_{k,v}
+
\gamma_k^{\mathrm{ref}}
-
e_{k,v}^{\mathrm{ref}},
\]
where
\[
\gamma_k^{\mathrm{ref}}=\bar c_k-c_k(\bar a_{-k}),
\qquad
e_{k,v}^{\mathrm{ref}}=\delta_{k,v}(\bar a_{-k}).
\]
\end{lemma}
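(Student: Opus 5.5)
The identity follows by expressing both prototypes through the analysis decomposition
\[
\mathbb{E}[y\mid a_k=v,\;a_{-k}=b]=s_{k,v}+c_k(b)+\delta_{k,v}(b)
\]
and subtracting. The decomposition is only a rewriting of the conditional mean. Under reference coverage (\Cref{def:reference-coverage}), \Cref{prop:well-definedness} guarantees that both \(\eta_{k,v}\) and \(\mu^{\mathrm{ref}}_{k,v}\) are well-defined, so every object below exists.

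\emph{Step 1: pooled prototype.} By the tower property, \(\eta_{k,v}=\mathbb{E}\bigl[\mathbb{E}[y\mid a_k=v,a_{-k}]\mid a_k=v\bigr]\), with the outer expectation taken over the observed contexts \(a_{-k}\mid a_k=v\). Substituting the decomposition and using linearity gives
\[
\eta_{k,v}=s_{k,v}+\mathbb{E}[c_k(a_{-k})\mid a_k=v]+\mathbb{E}[\delta_{k,v}(a_{-k})\mid a_k=v].
\]
Applying A1 to the second term and A2 to the third yields \(\eta_{k,v}=s_{k,v}+\bar c_k\).

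\emph{Step 2: reference prototype.} Evaluating the decomposition at \(b=\bar a_{-k}\), which is a positive-probability cell by reference coverage, gives
\[
\mu^{\mathrm{ref}}_{k,v}=s_{k,v}+c_k(\bar a_{-k})+\delta_{k,v}(\bar a_{-k}).
\]

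\emph{Step 3: subtract.} Subtracting Step 2 from Step 1, the value signal \(s_{k,v}\) cancels, leaving
\[
\eta_{k,v}-\mu^{\mathrm{ref}}_{k,v}=\bigl(\bar c_k-c_k(\bar a_{-k})\bigr)-\delta_{k,v}(\bar a_{-k})=\gamma_k^{\mathrm{ref}}-e^{\mathrm{ref}}_{k,v},
\]
which is the claimed identity.

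The only point needing care is Step 1. Interchanging the expectation with the decomposition requires each term to be integrable under \(a_{-k}\mid a_k=v\). Because the attribute spaces are finite, the conditional law of \(a_{-k}\) is a finite mixture, so each conditional expectation is a finite weighted sum and no integrability issue arises. This also shows that \(\gamma^{\mathrm{ref}}_k\) depends only on \(k\), since \(\bar c_k\) is \(v\)-independent by A1. That \(v\)-independence is the feature later used to argue that the shared shift cancels in prototype differences.
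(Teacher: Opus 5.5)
Your proposal is correct and follows the paper's proof essentially step for step. You expand \(\eta_{k,v}\) through the decomposition and reduce it to \(s_{k,v}+\bar c_k\) via A1--A2, evaluate the decomposition at \(\bar a_{-k}\) to obtain \(\mu^{\mathrm{ref}}_{k,v}\), and subtract; your added remarks on well-definedness under reference coverage and finite-sum integrability are harmless details that the paper leaves implicit.
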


\begin{proof}
By the conditional-mean decomposition,
\[
\eta_{k,v}
=
s_{k,v}
+
\mathbb{E}[c_k(a_{-k})\mid a_k=v]
+
\mathbb{E}[\delta_{k,v}(a_{-k})\mid a_k=v].
\]
Using A1--A2 gives
\[
\eta_{k,v}=s_{k,v}+\bar c_k.
\]
At the reference context,
\[
\mu^{\mathrm{ref}}_{k,v}
=
s_{k,v}+c_k(\bar a_{-k})+\delta_{k,v}(\bar a_{-k}).
\]
Subtracting yields
\[
\eta_{k,v}
=
\mu^{\mathrm{ref}}_{k,v}
+
\bar c_k-c_k(\bar a_{-k})
-
\delta_{k,v}(\bar a_{-k}).
\]
\end{proof}

\subsection{Pooled-reference perturbation bound}
\label{app:pooled-reference-proof}

\begin{proposition}[Pooled margin perturbation, value-specific split form]
\label{prop:pooled-margin-perturbation}
Assume A1--A2 and let
\[
\tilde y=y-\gamma_k^{\mathrm{ref}}.
\]
Suppose that for each \(v\in\mathcal A_k\),
\[
\|e_{k,v}^{\mathrm{ref}}\|_{P_k}
\leq
\varepsilon_{k,v}^{\mathrm{ref}}.
\]
Define
\[
\rho_{k,v}^{\mathrm{ref}}(y)
:=
\|\tilde y-\mu_{k,v}^{\mathrm{ref}}\|_{P_k},
\qquad
b_{k,v}^{\mathrm{ref}}(y)
:=
2\varepsilon_{k,v}^{\mathrm{ref}}
\rho_{k,v}^{\mathrm{ref}}(y)
+
(\varepsilon_{k,v}^{\mathrm{ref}})^2.
\]
For a target value \(t\), define
\[
B_{k,\mathrm{split}}^{\mathrm{ref}}(y;t)
:=
b_{k,t}^{\mathrm{ref}}(y)
+
\max_{v\neq t}b_{k,v}^{\mathrm{ref}}(y).
\]
Then
\[
\big|
M_k(y;t)-M_k^{\mathrm{ref}}(\tilde y;t)
\big|
\leq
B_{k,\mathrm{split}}^{\mathrm{ref}}(y;t).
\]
\end{proposition}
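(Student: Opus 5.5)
The plan is to reduce everything to a per-value comparison of squared \(P_k\)-distances, and then pass to margins using the fact that a finite minimum is \(1\)-Lipschitz in the sup norm. Throughout, write \(\langle u,w\rangle_{P_k}=u^\top P_k w\) and \(\|u\|_{P_k}^2=\langle u,u\rangle_{P_k}\). This is a genuine inner product because \(P_k\succ0\), so Cauchy--Schwarz is available.

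\textbf{Step 1 (rewrite pooled distances in reference coordinates).} By \Cref{lem:pooled-reference-decomposition}, \(\eta_{k,v}=\mu^{\mathrm{ref}}_{k,v}+\gamma_k^{\mathrm{ref}}-e^{\mathrm{ref}}_{k,v}\). Hence
\[
y-\eta_{k,v}=(y-\gamma_k^{\mathrm{ref}})-\mu^{\mathrm{ref}}_{k,v}+e^{\mathrm{ref}}_{k,v}=(\tilde y-\mu^{\mathrm{ref}}_{k,v})+e^{\mathrm{ref}}_{k,v}.
\]
Expanding the quadratic form gives
\[
d_k(y;v)=d_k^{\mathrm{ref}}(\tilde y;v)+2\langle \tilde y-\mu^{\mathrm{ref}}_{k,v},e^{\mathrm{ref}}_{k,v}\rangle_{P_k}+\|e^{\mathrm{ref}}_{k,v}\|_{P_k}^2 .
\]
Applying Cauchy--Schwarz in the \(P_k\)-inner product, together with the hypothesis \(\|e^{\mathrm{ref}}_{k,v}\|_{P_k}\le\varepsilon^{\mathrm{ref}}_{k,v}\), yields
\[
|d_k(y;v)-d_k^{\mathrm{ref}}(\tilde y;v)|\le 2\varepsilon^{\mathrm{ref}}_{k,v}\rho^{\mathrm{ref}}_{k,v}(y)+(\varepsilon^{\mathrm{ref}}_{k,v})^2=b^{\mathrm{ref}}_{k,v}(y)
\]
for every \(v\in\mathcal A_k\).

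\textbf{Step 2 (pass to margins).} Write \(M_k(y;t)-M_k^{\mathrm{ref}}(\tilde y;t)\) as the target-term difference \(d_k(y;t)-d_k^{\mathrm{ref}}(\tilde y;t)\) minus the difference of minima \(\min_{v\neq t}d_k(y;v)-\min_{v\neq t}d_k^{\mathrm{ref}}(\tilde y;v)\). Step 1 bounds the first piece by \(b^{\mathrm{ref}}_{k,t}(y)\).

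For the second piece, use the elementary fact \(|\min_v f_v-\min_v g_v|\le\max_v|f_v-g_v|\) over the finite set \(\mathcal A_k\setminus\{t\}\). To prove it, let \(v^\ast\) attain \(\min_v g_v\); then \(\min_v f\le f_{v^\ast}\le g_{v^\ast}+\max_v|f_v-g_v|\), and the reverse inequality follows by symmetry. The second piece is therefore at most \(\max_{v\neq t}b^{\mathrm{ref}}_{k,v}(y)\). The triangle inequality then gives the bound \(B^{\mathrm{ref}}_{k,\mathrm{split}}(y;t)\).

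\textbf{Main obstacle.} There is no real obstacle. The only point requiring care is the bookkeeping in Step 1. The shared shift \(\gamma_k^{\mathrm{ref}}\) must be absorbed into \(\tilde y\) rather than treated as error, and the signs must be tracked so that the residual enters as \(+e^{\mathrm{ref}}_{k,v}\). Once that is in place, the cross term is controlled by the radius \(\rho^{\mathrm{ref}}_{k,v}(y)\) measured at the shifted point, which is exactly the quantity in the statement. Finiteness of \(\mathcal A_k\), guaranteed by \Cref{prop:well-definedness}, ensures that all minima are attained.
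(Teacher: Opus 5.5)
Your proposal is correct and follows the paper's proof essentially step for step. It rewrites $y-\eta_{k,v}=(\tilde y-\mu^{\mathrm{ref}}_{k,v})+e^{\mathrm{ref}}_{k,v}$ via \Cref{lem:pooled-reference-decomposition}, expands and applies Cauchy--Schwarz to get $|d_k(y;v)-d_k^{\mathrm{ref}}(\tilde y;v)|\le b^{\mathrm{ref}}_{k,v}(y)$, bounds the competitor term with $|\min_v f_v-\min_v g_v|\le\max_v|f_v-g_v|$, and adds the two errors; one small correction is that finiteness of $\mathcal A_k$ is a standing assumption of the setup, not something guaranteed by \Cref{prop:well-definedness}, which itself relies on it.
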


\begin{proof}
By \Cref{lem:pooled-reference-decomposition},
\[
y-\eta_{k,v}
=
\tilde y-\mu^{\mathrm{ref}}_{k,v}+e_{k,v}^{\mathrm{ref}}.
\]
Therefore
\[
d_k(y;v)
=
\|\tilde y-\mu^{\mathrm{ref}}_{k,v}+e_{k,v}^{\mathrm{ref}}\|_{P_k}^2,
\]
while
\[
d_k^{\mathrm{ref}}(\tilde y;v)
=
\|\tilde y-\mu^{\mathrm{ref}}_{k,v}\|_{P_k}^2.
\]
Define
\[
\Delta_v
:=
d_k(y;v)-d_k^{\mathrm{ref}}(\tilde y;v).
\]
Expanding the square gives
\[
\Delta_v
=
2\left\langle
\tilde y-\mu_{k,v}^{\mathrm{ref}},
e_{k,v}^{\mathrm{ref}}
\right\rangle_{P_k}
+
\|e_{k,v}^{\mathrm{ref}}\|_{P_k}^2.
\]
By Cauchy--Schwarz and the value-specific leakage bound,
\[
|\Delta_v|
\leq
2\varepsilon_{k,v}^{\mathrm{ref}}
\|\tilde y-\mu_{k,v}^{\mathrm{ref}}\|_{P_k}
+
(\varepsilon_{k,v}^{\mathrm{ref}})^2
=
b_{k,v}^{\mathrm{ref}}(y).
\]
Therefore
\[
|\Delta_t|\leq b_{k,t}^{\mathrm{ref}}(y).
\]
For the competitor term, using
\[
\left|
\min_{v\neq t} a_v-\min_{v\neq t} b_v
\right|
\leq
\max_{v\neq t}|a_v-b_v|,
\]
we obtain
\[
\left|
\min_{v\neq t}d_k(y;v)
-
\min_{v\neq t}d_k^{\mathrm{ref}}(\tilde y;v)
\right|
\leq
\max_{v\neq t}b_{k,v}^{\mathrm{ref}}(y).
\]
Adding target and competitor errors gives
\[
\big|
M_k(y;t)-M_k^{\mathrm{ref}}(\tilde y;t)
\big|
\leq
b_{k,t}^{\mathrm{ref}}(y)
+
\max_{v\neq t}b_{k,v}^{\mathrm{ref}}(y)
=
B_{k,\mathrm{split}}^{\mathrm{ref}}(y;t).
\]

\end{proof}

\begin{corollary}[Buffered pooled-comparator decision equivalence]
\label{cor:comparator-pooled-decision}
Under the assumptions of \Cref{prop:pooled-margin-perturbation},
\[
\left|M_k^{\mathrm{ref}}(\tilde y;t)\right|
>
B_{k,\mathrm{split}}^{\mathrm{ref}}(y;t)
\quad\Longrightarrow\quad
\operatorname{sign} M_k(y;t)
=
\operatorname{sign} M_k^{\mathrm{ref}}(\tilde y;t).
\]
The same conclusion holds with \(M_k(y;t)\) and
\(M_k^{\mathrm{ref}}(\tilde y;t)\) interchanged.
\end{corollary}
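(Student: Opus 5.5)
The corollary follows from \Cref{prop:pooled-margin-perturbation} by a reverse-triangle-inequality argument. Write \(a:=M_k^{\mathrm{ref}}(\tilde y;t)\), \(b:=M_k(y;t)\) and \(B:=B_{k,\mathrm{split}}^{\mathrm{ref}}(y;t)\). The proposition gives \(|b-a|\le B\), and by construction \(B\ge 0\), since each \(b^{\mathrm{ref}}_{k,v}(y)\) is a sum of nonnegative terms. The hypothesis is \(|a|>B\). In particular \(a\neq 0\), so \(\operatorname{sign} a\in\{-1,+1\}\).

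We split into two cases on the sign of \(a\).
\begin{itemize}
\item If \(a>B\), then \(b\ge a-|b-a|\ge a-B>0\), so \(\operatorname{sign} b=+1=\operatorname{sign} a\).
\item If \(a<-B\), then \(b\le a+|b-a|\le a+B<0\), so \(\operatorname{sign} b=-1=\operatorname{sign} a\).
\end{itemize}
Both cases can be written at once as \(|b|\ge|a|-|b-a|\ge |a|-B>0\) together with \(a\,b>0\). The product inequality follows from \(a\,b=a^2+a(b-a)\ge |a|^2-|a|B=|a|(|a|-B)>0\).

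For the interchanged statement, assume instead \(|b|>B\). The perturbation bound \(|b-a|\le B\) is symmetric in \(a\) and \(b\), so the same two cases with the roles of \(a\) and \(b\) swapped give \(a\,b>0\), hence \(\operatorname{sign} a=\operatorname{sign} b\).

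The only point needing care is the strictness of the buffer inequality. It is exactly what excludes \(b=0\), where the sign would be ambiguous, in the boundary case \(|a|=B\).
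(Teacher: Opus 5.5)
Your proof is correct and takes the same route as the paper. Both apply the bound $|M_k(y;t)-M_k^{\mathrm{ref}}(\tilde y;t)|\le B_{k,\mathrm{split}}^{\mathrm{ref}}(y;t)$ from \Cref{prop:pooled-margin-perturbation} and note that a margin exceeding the buffer in magnitude forces the other margin to have the same sign; you just write out the case analysis and the symmetric interchanged case that the paper states in one line.
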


\begin{proof}
\Cref{prop:pooled-margin-perturbation} gives
\[
\left|
M_k(y;t)-M_k^{\mathrm{ref}}(\tilde y;t)
\right|
\leq
B_{k,\mathrm{split}}^{\mathrm{ref}}(y; t).
\]
Thus if either margin has magnitude larger than \(B_{k,\mathrm{split}}^{\mathrm{ref}}(y;t)\),
the other margin has the same sign. This proves the claim.
\end{proof}

\subsection{Empirical decision agreement (\Cref{cor:comparator-pooled-decision})}
\label{app:perturbation-bound-empirical}

The value-specific split certificate is
\[
B_{k,\mathrm{split}}^{\mathrm{ref}}(y;t)
=
b_{k,t}(y)
+
\max_{v\neq t} b_{k,v}(y),
\qquad
b_{k,v}(y)
=
2\varepsilon_{k,v}^{\mathrm{ref}}
\|\tilde y-\mu_{k,v}^{\mathrm{ref}}\|_{P_k}
+
(\varepsilon_{k,v}^{\mathrm{ref}})^2 .
\]
When
\[
|M_k^{\mathrm{ref}}(\tilde y;t)|
>
B_{k,\mathrm{split}}^{\mathrm{ref}}(y;t),
\]
\Cref{cor:comparator-pooled-decision} certifies that the pooled and reference-anchored margin signs agree.

We instantiate this diagnostic on held-out real samples from both datasets, where the reference-anchored comparator can be evaluated. For each attribute \(k\), we estimate \(\eta_{k,v}\), \(P_k\), and \(\mu_{k,v}^{\mathrm{ref}}\) using the same calibration protocol as the trust-score experiments, and evaluate on the canonical held-out evaluation splits.

For each sample, let \(a_k^\star(y)\) denote the target value. We compare the deployed pooled margin
\[
M_k(y;a_k^\star)
=
d_k(y;a_k^\star)
-
\min_{v\neq a_k^\star} d_k(y;v)
\]
with the corresponding reference-anchored margin
\[
M_k^{\mathrm{ref}}(\tilde y;a_k^\star)
=
d_k^{\mathrm{ref}}(\tilde y;a_k^\star)
-
\min_{v\neq a_k^\star} d_k^{\mathrm{ref}}(\tilde y;v).
\]
The certificate used in \Cref{tab:pooled-reference-certification} is
\[
B_{k,\mathrm{split}}^{\mathrm{ref}}(y;a_k^\star)
=
b_{k,a_k^\star}(y)
+
\max_{v\neq a_k^\star} b_{k,v}(y).
\]

\begin{table}[t]
\caption{
Empirical pooled/reference sign certification using the split bound
$
B_{k,\mathrm{split}}^{\mathrm{ref}}(y;t)
=
b_{k,t}(y)+\max_{v\neq t}b_{k,v}(y),
\qquad
b_{k,v}(y)
=
2\varepsilon_{k,v}^{\mathrm{ref}}
\|\tilde y-\mu_{k,v}^{\mathrm{ref}}\|_{P_k}
+
(\varepsilon_{k,v}^{\mathrm{ref}})^2.
$
$\mathrm{med}\,B_{\mathrm{split}}$ is the median certificate radius, and
$\Pr[\mathrm{certified}]$ is the fraction of samples satisfying
$|M_k^{\mathrm{ref}}(\tilde y;t)|>B_{k,\mathrm{split}}^{\mathrm{ref}}(y;t)$.
Agree(all) and Agree(cert.) report sign agreement before and after certification.
}
\centering
\small
\setlength{\tabcolsep}{4pt}
\begin{adjustbox}{width=\linewidth}
\begin{tabular}{llrrrrrrrr}
\rowcolor{gray!20}\toprule
Dataset
& Attribute
& $N_{\mathrm{eval}}$
& $V$
& $\varepsilon_k^{\mathrm{ref}}$
& $|M^{\mathrm{ref}}|^{\mathrm{med}}$
& $\mathrm{med}\,B_{\mathrm{split}}$
& $\Pr[\mathrm{certified}]$
& Agree(all)
& Agree(cert.) \\
\midrule
CelebA & Male
& 162770 & 2 & 0.79 & 74.2 & 53.0 & \textbf{0.862} & \textbf{1.000} & \textbf{1.000} \\
CelebA & Smiling
& 162770 & 2 & 0.70 & 22.6 & 44.8 & 0.007 & 0.998 & \textbf{1.000} \\
CelebA & Blond\_Hair
& 162770 & 2 & 0.78 & 21.6 & 49.7 & 0.008 & 0.994 & \textbf{1.000} \\
CelebA & Eyeglasses
& 162770 & 2 & 0.73 & 13.3 & 46.0 & \(<0.001\) & 0.990 & \textbf{1.000} \\
RxRx1 & cell\_type\_id
& 758 & 4 & 0.79 & 14.0 & 15.2 & 0.421 & 0.997 & \textbf{1.000} \\
RxRx1 & sirna\_id
& 640 & 20 & 1.04 & 1.2 & 16.6 & 0.000 & 0.803 & --- \\
\bottomrule
\end{tabular}
\end{adjustbox}

\label{tab:pooled-reference-certification}
\end{table}

\paragraph{Certified agreement.}
Across every row with nonzero certified support, the certified agreement rate is \(1.000\). This is the empirical behavior predicted by \Cref{cor:comparator-pooled-decision}: whenever the reference margin exceeds the split-radius perturbation band, the pooled and reference-anchored decisions have matching signs. The certificate is strongest for CelebA Male, where \(86.2\%\) of samples are certified, and for RxRx1 cell type, where \(42.1\%\) are certified.

\paragraph{Agreement beyond the certified regime.}
The certificate is sufficient, not necessary. Thus low certified coverage does not imply poor pooled/reference agreement. On CelebA, agreement remains very high for all attributes, ranging from \(0.990\) to \(1.000\), even when the certified fraction is small. RxRx1 cell type also has high agreement, \(0.997\), with substantial certified coverage. The main failure case is the finer-grained RxRx1 sirna decision: the median reference margin is only \(1.2\), while the median split bound is \(16.6\), so no samples are certified and overall agreement drops to \(0.803\).

\paragraph{Why certification is hard in low-margin regimes.}
The split bound is tighter than the earlier one-radius bound, but it can still be conservative when the reference margin is small relative to the leakage-weighted prototype radii. This is most visible for Smiling, Blond\_Hair, and Eyeglasses, where empirical agreement is high but few samples have margins large enough to clear the certificate. In the RxRx1 sirna setting, the issue is more severe: many sirna prototypes are close in the reference geometry, so the deployed margin is often small compared with the perturbation band. This is exactly the regime where the theory predicts pooled scoring should be least reference-anchored comparator-like.

\subsection{Empirical A1 context-balance diagnostic}
\label{app:a1-diagnostic}

A1 ($\mathbb{E}[c_k(a_{-k})\mid a_k=v]=\bar c_k$) is the load-bearing condition that lets pooled scoring stand in for the reference-anchored comparator. By \Cref{lem:pooled-reference-decomposition}, under A1+A2 the difference $\eta_{k,v}-\mu_{k,v}^{\mathrm{ref}}=\gamma_k^{\mathrm{ref}}-e_{k,v}^{\mathrm{ref}}$ is $v$-independent up to the leakage $e_{k,v}^{\mathrm{ref}}=\delta_{k,v}(\bar a_{-k})$. We therefore use $\varepsilon_k^{\mathrm{ref}}=\max_v\|e_{k,v}^{\mathrm{ref}}\|_{P_k}$ as an operational A1-violation scale and compare it to the inter-prototype reference scale $\|\mu_{k,u}^{\mathrm{ref}}-\mu_{k,t}^{\mathrm{ref}}\|_{P_k}$, which estimates the value-signal gap $\|s_{k,u}-s_{k,t}\|_{P_k}$ in \Cref{lem:pooled-reference-decomposition}. A ratio $\ll 1$ means that any A1 violation is small relative to the attribute signal in the reference geometry; a ratio $\gtrsim 1$ indicates that leakage is large enough to obscure at least one value-pair comparison. Estimation reuses exactly the fitting and evaluation pools of \Cref{tab:pooled-reference-certification}; \Cref{tab:a1-diagnostic} reports the resulting per-attribute leakage-to-gap ratios.

 \begin{table}[h]
  \centering
  \caption{Empirical A1 diagnostic. ``min gap'' and ``median gap''
  are the minimum and median pairwise $P_k$-Mahalanobis distances $\|\mu_{k,u}^{\mathrm{ref}}-\mu_{k,t}^{\mathrm{ref}}\|_{P_k}$ over $u\neq t$;
  Smaller $\varepsilon/\text{gap}$ indicates greater
  compatibility with the A1 regime. Bold flags the regime where leakage exceeds the closest value-pair gap.}
  \label{tab:a1-diagnostic}
  \small
  \begin{tabular}{ll r r r r r r}
  \rowcolor{gray!20}\toprule
  Dataset & Attribute & $V$ & $\varepsilon_k^{\mathrm{ref}}$ & min gap & median gap & $\varepsilon/$min & $\varepsilon/$median \\
  \midrule
  CelebA & Male        & 2  & 0.79 & 8.81 & 8.81 & 0.090 & 0.090 \\
  CelebA & Smiling     & 2  & 0.70 & 4.99 & 4.99 & 0.141 & 0.141 \\
  CelebA & Blond\_Hair & 2  & 0.78 & 4.86 & 4.86 & 0.160 & 0.160 \\
  CelebA & Eyeglasses  & 2  & 0.73 & 3.84 & 3.84 & 0.190 & 0.190 \\
  \midrule
  RxRx1 & cell\_type\_id & 4  & 0.79 & 3.96 & 5.28 & 0.199 & 0.149 \\
  RxRx1 & sirna\_id      & 20 & 1.04 & 0.90 & 1.64 & \textbf{1.157} & 0.635 \\
  \bottomrule
  \end{tabular}
  \end{table}

\paragraph{Small-leakage regimes.}
On all four CelebA attributes, the leakage-to-gap ratio is below \(0.20\). RxRx1 cell type is also in a small-leakage regime, with \(\varepsilon/\text{median gap}=0.149\) and \(\varepsilon/\text{min gap}=0.199\). These are precisely the settings where empirical pooled-vs-comparator sign agreement is near-perfect: agreement ranges from \(0.990\) to \(1.000\) on CelebA and is \(0.997\) on RxRx1 cell type (\Cref{tab:pooled-reference-certification}).

\paragraph{Large-leakage regime.}
The only regime where leakage exceeds the closest value-pair gap is RxRx1 sirna, with
\(\varepsilon/\text{min gap}=1.157\). Its median leakage-to-gap ratio remains below one,
\(\varepsilon/\text{median gap}=0.635\), but the closest sirna prototypes are sufficiently close that the perturbation band dominates many margins. This matches \Cref{tab:pooled-reference-certification}: RxRx1 sirna has median reference margin \(1.2\), median split certificate radius \(16.6\), no certified samples, and substantially lower pooled/reference agreement \(0.803\).

\paragraph{Takeaway.}
The A1 diagnostic aligns with the empirical certification results. Small leakage-to-gap ratios coincide with high pooled/reference agreement and, for CelebA Male and RxRx1 cell type, substantial certified coverage. The only ratio exceeding one occurs for the closest RxRx1 sirna pair, exactly where certification fails and empirical agreement is weakest. Thus A1 is not merely a formal assumption: its violation scale is measurable and predicts when pooled scoring is a reliable reference-comparator stand-in.

\subsection{Exact translation equivalence}
\label{app:exact-translation}
\begin{corollary}[Exact translation equivalence]
\label{cor:exact-translation}
Suppose
\[
\eta_{k,v}
=
\mu_{k,v}^{\mathrm{ref}}
+
\gamma
\qquad
\text{for all }v\in\mathcal{A}_k.
\]
Then, for every \(y\) and every target \(t\),
\[
M_k(y;t)
=
M_k^{\mathrm{ref}}(y-\gamma;t).
\]
\end{corollary}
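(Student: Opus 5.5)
The plan is to substitute the hypothesis directly into the pooled distances. The argument is a pure algebraic identity and needs no assumptions beyond $P_k\succ0$ and finiteness of $\mathcal A_k$, which make both margins well-defined by \Cref{prop:well-definedness}.

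First fix $y$ and any $v\in\mathcal A_k$. By hypothesis $\eta_{k,v}=\mu^{\mathrm{ref}}_{k,v}+\gamma$, so
\[
y-\eta_{k,v}=(y-\gamma)-\mu^{\mathrm{ref}}_{k,v}.
\]
Plugging this into the quadratic form gives
\[
d_k(y;v)=\bigl((y-\gamma)-\mu^{\mathrm{ref}}_{k,v}\bigr)^\top P_k\bigl((y-\gamma)-\mu^{\mathrm{ref}}_{k,v}\bigr)=d_k^{\mathrm{ref}}(y-\gamma;v).
\]
This equality holds for every $v$, with the same shifted argument $y-\gamma$.

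Second, since the two families $\{d_k(y;v)\}_{v\neq t}$ and $\{d_k^{\mathrm{ref}}(y-\gamma;v)\}_{v\neq t}$ coincide term by term, their minima over the finite set $v\neq t$ coincide. Subtracting that common minimum from the equal target terms yields $M_k(y;t)=M_k^{\mathrm{ref}}(y-\gamma;t)$.

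As a consistency check, this is the degenerate case of \Cref{prop:pooled-margin-perturbation}. There, a $v$-independent shift means $e^{\mathrm{ref}}_{k,v}=0$, so every $\varepsilon^{\mathrm{ref}}_{k,v}=0$ and hence $B^{\mathrm{ref}}_{k,\mathrm{split}}=0$. I prefer the direct route, because the corollary is stated without A1--A2 and only assumes the translation relation.

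There is no real obstacle. The only point to watch is that the shift $\gamma$ must be the same for all $v$, including $t$; otherwise the arguments of the target and competitor distances would differ.
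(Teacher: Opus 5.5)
Your proof is correct and is essentially the paper's argument: substituting $\eta_{k,v}=\mu^{\mathrm{ref}}_{k,v}+\gamma$ gives $d_k(y;v)=d_k^{\mathrm{ref}}(y-\gamma;v)$ for every $v$, and taking the target distance minus the minimum competitor distance yields the identity. Your side remark via \Cref{prop:pooled-margin-perturbation} tacitly identifies $\gamma$ with $\gamma_k^{\mathrm{ref}}$; in general the hypothesis only makes $e^{\mathrm{ref}}_{k,v}$ constant in $v$. The direct route does not depend on that remark.
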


\begin{proof}
For every \(v\),
\[
y-\eta_{k,v}
=
y-\gamma-\mu_{k,v}^{\mathrm{ref}}.
\]
Therefore
\[
d_k(y;v)
=
d_k^{\mathrm{ref}}(y-\gamma;v)
\]
for every \(v\). Taking the target distance minus the minimum competitor distance gives the result.
\end{proof}

\subsection{Full-feature Mahalanobis discriminants}
\label{app:full-feature-discriminants}

\begin{proposition}[Pairwise decisions depend only on discriminant directions]
\label{prop:discriminant-directions}
Fix an attribute \(k\), a positive definite matrix \(P_k\), and prototypes
\[
\{m_{k,v}:v\in\mathcal{A}_k\}\subset\mathbb{R}^d.
\]
Define
\[
d_k^{(m)}(y;v)
:=
(y-m_{k,v})^\top P_k (y-m_{k,v}).
\]
For any two values \(t,u\in\mathcal{A}_k\),
\[
d_k^{(m)}(y;t)-d_k^{(m)}(y;u)
=
2(m_{k,u}-m_{k,t})^\top P_k y
+
m_{k,t}^\top P_k m_{k,t}
-
m_{k,u}^\top P_k m_{k,u}.
\]
Thus the sign of the pairwise preference between \(t\) and \(u\) depends on \(y\) only through the scalar projection
\[
\left(P_k(m_{k,u}-m_{k,t})\right)^\top y.
\]
Equivalently, only the component of \(y\) along the span of the pairwise discriminant directions
\[
\operatorname{span}
\left\{
P_k(m_{k,u}-m_{k,t})
:
u\neq t
\right\}
\]
can affect the target-vs-competitor comparison.
\end{proposition}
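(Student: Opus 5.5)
The plan is a direct expansion of the two quadratic forms, followed by reading off the dependence on \(y\).

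First, I will write \(d_k^{(m)}(y;v)=y^\top P_k y-2m_{k,v}^\top P_k y+m_{k,v}^\top P_k m_{k,v}\). This uses the symmetry of \(P_k\): as a positive definite matrix it is symmetric by convention, and in any case only its symmetric part enters the quadratic form. Subtracting the expressions for \(v=t\) and \(v=u\), the quadratic term \(y^\top P_k y\) cancels because the same \(P_k\) is shared across values. This sharing is the one structural point to stress, since it is why the estimator pools within-value covariances. What remains is \(2(m_{k,u}-m_{k,t})^\top P_k y+m_{k,t}^\top P_k m_{k,t}-m_{k,u}^\top P_k m_{k,u}\), which is exactly the displayed identity.

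Second, the pairwise difference is affine in \(y\). Its only \(y\)-dependent part is \(2\,(P_k(m_{k,u}-m_{k,t}))^\top y\), again using \(P_k^\top=P_k\). Its sign is therefore a function of this scalar projection alone: for a fixed offset, one compares the projection against a threshold.

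Third, for the span statement, I will decompose \(y=y_\parallel+y_\perp\). Here \(y_\parallel\) lies in \(W:=\operatorname{span}\{P_k(m_{k,u}-m_{k,t}):u\neq t\}\) and \(y_\perp\in W^\perp\) in the Euclidean inner product. Each pairwise difference with \(t\) is unchanged when \(y_\perp\) is dropped. The target-vs-competitor margin can be written as \(M^{(m)}_k(y;t)=\max_{u\neq t}\bigl(d_k^{(m)}(y;t)-d_k^{(m)}(y;u)\bigr)\), a maximum of these pairwise differences, so it too depends only on \(y_\parallel\). This also covers the comparisons between two competitors \(u,u'\), because \(P_k(m_{k,u}-m_{k,u'})\) is the difference of two spanning vectors and so lies in \(W\).

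I do not expect any real obstacle. The only points needing care are stating the symmetry of \(P_k\) explicitly and noting that the margin, not just the pairwise preferences, inherits the invariance through the max representation.
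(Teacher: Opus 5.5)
Your proposal is correct and follows the paper's own proof: you expand both quadratic forms, cancel the shared term \(y^\top P_k y\), and use the symmetry of \(P_k\) to read off that the \(y\)-dependence enters only through \(\left(P_k(m_{k,u}-m_{k,t})\right)^\top y\). Your added remark, that the margin \(\max_{u\neq t}\bigl(d_k^{(m)}(y;t)-d_k^{(m)}(y;u)\bigr)\) and the competitor-vs-competitor comparisons inherit the same invariance, is a correct and slightly more explicit version of what the paper states.
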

\begin{proof}[Proof of \Cref{prop:discriminant-directions}]
Fix an attribute \(k\), prototypes
\[
\{m_{k,v}:v\in\mathcal{A}_k\},
\]
and a positive definite matrix \(P_k\). For any two values \(t,u\in\mathcal{A}_k\), expand the two Mahalanobis distances:
\[
d_k^{(m)}(y;t)
=
(y-m_{k,t})^\top P_k (y-m_{k,t})
=
y^\top P_k y
-
2m_{k,t}^\top P_k y
+
m_{k,t}^\top P_k m_{k,t},
\]
and
\[
d_k^{(m)}(y;u)
=
(y-m_{k,u})^\top P_k (y-m_{k,u})
=
y^\top P_k y
-
2m_{k,u}^\top P_k y
+
m_{k,u}^\top P_k m_{k,u}.
\]
Subtracting cancels the common quadratic term \(y^\top P_k y\), giving
\[
d_k^{(m)}(y;t)-d_k^{(m)}(y;u)
=
2(m_{k,u}-m_{k,t})^\top P_k y
+
m_{k,t}^\top P_k m_{k,t}
-
m_{k,u}^\top P_k m_{k,u}.
\]
Thus the pairwise decision between \(t\) and \(u\) depends on \(y\) only through
\[
\left(P_k(m_{k,u}-m_{k,t})\right)^\top y,
\]
using symmetry of \(P_k\). Therefore components of \(y\) orthogonal to the span of the discriminant directions
\[
\operatorname{span}
\left\{
P_k(m_{k,u}-m_{k,t})
:
u,t\in\mathcal{A}_k,\; u\neq t
\right\}
\]
cannot affect any pairwise Mahalanobis decision for attribute \(k\). This proves the proposition.
\end{proof}

\subsection{Robustness details}
\label{app:robustness-details}

This subsection records an additional sufficient condition under which residual context variation does not change the attribute-wise Mahalanobis decision. It is not needed for the pooled-reference identification result above, but helps explain why full-feature scoring can remain stable when residual variation is small in the relevant discriminant directions.

\begin{proposition}[Robustness to residual variation in discriminant directions]
\label{prop:interaction-robustness}
Fix an attribute \(k\), a target value \(t\in\mathcal{A}_k\), and suppose the ideal pooled prototypes satisfy
\[
\eta_{k,v}=\bar c_k+s_{k,v}.
\]
Let a generated sample targeted at value \(t\) have normalized feature
\[
y=\bar c_k+s_{k,t}+\lambda,
\]
where \(\lambda\in\mathbb{R}^d\) collects residual context dependence, imperfect disentanglement, and noise. Then for every competitor \(u\neq t\),
\[
d_k(y;t)-d_k(y;u)
=
2(s_{k,u}-s_{k,t})^\top P_k \lambda
-
\|s_{k,u}-s_{k,t}\|_{P_k}^2,
\]
where \(\|z\|_{P_k}^2:=z^\top P_k z\). Consequently, if
\[
2\max_{u\neq t}\left|(s_{k,u}-s_{k,t})^\top P_k\lambda\right|
<
\min_{u\neq t}\|s_{k,u}-s_{k,t}\|_{P_k}^2,
\]
then
\[
M_k(y;t)<0.
\]
\end{proposition}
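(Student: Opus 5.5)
The plan is a direct algebraic expansion, reusing the discriminant identity of \Cref{prop:discriminant-directions} with prototypes \(m_{k,v}=\eta_{k,v}=\bar c_k+s_{k,v}\).

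\textbf{Step 1 (translate away the shared context).} Since \(y-\eta_{k,v}=(s_{k,t}-s_{k,v})+\lambda\) for every \(v\), we get
\[
d_k(y;v)=\|s_{k,t}-s_{k,v}+\lambda\|_{P_k}^2 .
\]
In particular \(d_k(y;t)=\|\lambda\|_{P_k}^2\). This is the same translation trick as in \Cref{cor:exact-translation}: the shared context \(\bar c_k\) cancels in every distance.

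\textbf{Step 2 (expand the competitor distance).} For a competitor \(u\neq t\), set \(w:=s_{k,u}-s_{k,t}\). Then
\[
d_k(y;u)=\|\lambda-w\|_{P_k}^2=\|\lambda\|_{P_k}^2-2w^\top P_k\lambda+\|w\|_{P_k}^2,
\]
using the symmetry of \(P_k\). Subtracting this from \(d_k(y;t)=\|\lambda\|_{P_k}^2\), the quadratic terms in \(\lambda\) cancel, and we get
\[
d_k(y;t)-d_k(y;u)=2w^\top P_k\lambda-\|w\|_{P_k}^2 ,
\]
which is the claimed identity. As a consistency check, this agrees with \Cref{prop:discriminant-directions} after substituting \(y\).

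\textbf{Step 3 (sign of the margin).} Because \(\mathcal A_k\) is finite, the minimum over competitors is attained, so
\[
M_k(y;t)=\max_{u\neq t}\bigl[d_k(y;t)-d_k(y;u)\bigr].
\]
Bound each term by
\[
2\bigl|w_u^\top P_k\lambda\bigr|-\|w_u\|_{P_k}^2\le 2\max_{u'\neq t}\bigl|w_{u'}^\top P_k\lambda\bigr|-\min_{u'\neq t}\|w_{u'}\|_{P_k}^2 .
\]
The hypothesis says the right-hand side is strictly negative, so every term is strictly negative and their maximum \(M_k(y;t)\) is negative as well.

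There is no real obstacle here. The only points needing care are the sign bookkeeping (with \(w=s_{k,u}-s_{k,t}\), the cross term carries a plus sign) and writing the margin as a maximum over finitely many competitors, which is what lets the per-competitor bound transfer to \(M_k\).
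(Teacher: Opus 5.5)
Your proof is correct and follows essentially the same route as the paper. The shared context $\bar c_k$ cancels, giving $d_k(y;t)=\|\lambda\|_{P_k}^2$, and expanding the competitor distance yields the pairwise identity. The hypothesis then forces every pairwise difference, and hence the margin $M_k(y;t)$ (their maximum), to be strictly negative.
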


\begin{proof}
Since \(\eta_{k,t}=\bar c_k+s_{k,t}\), we have
\[
y-\eta_{k,t}
=
\bar c_k+s_{k,t}+\lambda-(\bar c_k+s_{k,t})
=
\lambda,
\]
and therefore
\[
d_k(y;t)=\lambda^\top P_k \lambda.
\]
For a competitor \(u\neq t\),
\[
y-\eta_{k,u}
=
\bar c_k+s_{k,t}+\lambda-(\bar c_k+s_{k,u})
=
s_{k,t}+\lambda-s_{k,u}.
\]
Hence
\begin{align*}
d_k(y;u)
&=
(s_{k,t}+\lambda-s_{k,u})^\top P_k (s_{k,t}+\lambda-s_{k,u})\\
&=
\lambda^\top P_k\lambda
-2(s_{k,u}-s_{k,t})^\top P_k\lambda
+\|s_{k,u}-s_{k,t}\|_{P_k}^2.
\end{align*}
Subtracting gives
\[
d_k(y;t)-d_k(y;u)
=
2(s_{k,u}-s_{k,t})^\top P_k\lambda
-\|s_{k,u}-s_{k,t}\|_{P_k}^2.
\]
If
\[
2\left|(s_{k,u}-s_{k,t})^\top P_k\lambda\right|
<
\|s_{k,u}-s_{k,t}\|_{P_k}^2
\]
for every \(u\neq t\), then each pairwise difference \(d_k(y;t)-d_k(y;u)\) is strictly negative. Therefore the target value \(t\) beats every competitor under the Mahalanobis comparison, so
\[
M_k(y;t)=d_k(y;t)-\min_{u\neq t}d_k(y;u)<0.
\]
\end{proof}

\section{Estimation details}
\label{app:estimation-details}

This appendix records the finite-sample estimators used to instantiate the population-level objects from the main text.

\subsection{Calibration details for score normalization constants}

Let \(\mathcal{D}_{\mathrm{real}}\) denote the real-data split used to fit and calibrate the score model, and let \(\mathcal{D}_{\mathrm{cal}}\subseteq \mathcal{D}_{\mathrm{real}}\) denote the subset used for calibration. In practice, \(\mathcal{D}_{\mathrm{cal}}\) may be a held-out calibration split or the same real-data split used for fitting when data is limited.

For realism, after fitting \((\mu_{\mathrm{real}},\Sigma_{\mathrm{real}})\), we compute
\[
\widehat E_{\mathrm{real}}(y_i)
:=
(y_i-\widehat\mu_{\mathrm{real}})^\top \widehat\Sigma_{\mathrm{real}}^{-1}(y_i-\widehat\mu_{\mathrm{real}})
\qquad\text{for } y_i\in\mathcal{D}_{\mathrm{cal}},
\]
and estimate
\[
\hat m_R:=\frac{1}{|\mathcal{D}_{\mathrm{cal}}|}\sum_{y_i\in\mathcal{D}_{\mathrm{cal}}}\widehat E_{\mathrm{real}}(y_i),
\qquad
\hat s_R:=\sqrt{\frac{1}{|\mathcal{D}_{\mathrm{cal}}|-1}\sum_{y_i\in\mathcal{D}_{\mathrm{cal}}}\big(\widehat E_{\mathrm{real}}(y_i)-\hat m_R\big)^2 }.
\]
The calibrated realism score is then
\[
\widehat R(y):=\frac{\widehat E_{\mathrm{real}}(y)-\hat m_R}{\hat s_R+\varepsilon_R},
\]
where \(\varepsilon_R>0\) is a small numerical stabilizer.

For faithfulness, fix an attribute \(k\) and target value \(t\in\mathcal{A}_k\). For each calibration sample \(y_i\) satisfying \(a_{i,k}=t\), compute the fitted margin
\[
\widehat M_k(y_i;t):=\widehat d_k(y_i;t)-\min_{v\neq t}\widehat d_k(y_i;v).
\]
Then estimate
\[
\hat m_{k,t}:=
\frac{1}{|\mathcal{D}_{k,t}^{\mathrm{cal}}|}
\sum_{y_i\in\mathcal{D}_{k,t}^{\mathrm{cal}}}\widehat M_k(y_i;t),
\]
and
\[
\hat s_{k,t}:=
\sqrt{
\frac{1}{|\mathcal{D}_{k,t}^{\mathrm{cal}}|-1}
\sum_{y_i\in\mathcal{D}_{k,t}^{\mathrm{cal}}}
\big(\widehat M_k(y_i;t)-\hat m_{k,t}\big)^2
},
\]
where
\[
\mathcal{D}_{k,t}^{\mathrm{cal}}:=\{y_i\in\mathcal{D}_{\mathrm{cal}}: a_{i,k}=t\}.
\]
The calibrated attribute score becomes
\[
\widehat F_k(y;t):=\frac{\widehat M_k(y;t)-\hat m_{k,t}}{\hat s_{k,t}+\varepsilon_F},
\]
with \(\varepsilon_F>0\) a small numerical stabilizer.

In practice, if a class-conditional calibration set is very small, one may use a minimum-count threshold before reporting calibrated scores, pool variance estimates across values of the same attribute, or clip the denominator away from zero. These are finite-sample stabilization choices and do not affect the population-level theory above.

\subsection{Shared-covariance estimation and shrinkage}

For each attribute \(k\), let
\[
\widehat\eta_{k,v}
=
\frac{1}{n_{k,v}}
\sum_{i:a_{i,k}=v} y_i
\]
denote the empirical prototype for value \(v\in\mathcal{A}_k\), where \(n_{k,v}\) is the number of real samples with \(a_{i,k}=v\).

The pooled within-value covariance is
\[
\widehat\Sigma_k^{\mathrm{within}}
:=
\frac{1}{N_k-|\mathcal{A}_k|}
\sum_{v\in\mathcal{A}_k}
\sum_{i:a_{i,k}=v}
(y_i-\widehat\eta_{k,v})(y_i-\widehat\eta_{k,v})^\top,
\]
where \(N_k=\sum_{v\in\mathcal{A}_k} n_{k,v}\).

To improve conditioning in high dimension, we use shrinkage toward an isotropic target:
\[
\widehat\Sigma_k^{\mathrm{shr}}
:=
(1-\alpha_k)\widehat\Sigma_k^{\mathrm{within}}
+
\alpha_k \tau_k I,
\qquad
\tau_k:=\frac{1}{d}\operatorname{tr}\big(\widehat\Sigma_k^{\mathrm{within}}\big),
\]
with shrinkage coefficient \(\alpha_k\in[0,1]\). The corresponding precision estimate is
\[
\widehat P_k:=\big(\widehat\Sigma_k^{\mathrm{shr}}+\varepsilon_P I\big)^{-1},
\]
where \(\varepsilon_P>0\) is a small numerical ridge added for inversion stability.

The resulting fitted quadratic distance is
\[
\widehat d_k(y;v):=(y-\widehat\eta_{k,v})^\top \widehat P_k (y-\widehat\eta_{k,v}).
\]

The same construction can be used for the global realism model by replacing the class-conditional pooled covariance with the empirical covariance of all real normalized features.

\subsection{Finite-sample considerations}

Several finite-sample issues are worth noting.

First, reference coverage is a population support condition. In finite data, the corresponding anchored slices may exist but contain few examples. This is one reason we use pooled prototypes in the practical method even when the theory is stated relative to a reference-anchored comparator.

Second, class imbalance affects both prototype estimation and calibration. For heavily imbalanced attributes, the empirical prototypes \(\widehat\eta_{k,v}\) and calibration moments \((\hat m_{k,t},\hat s_{k,t})\) may have different variances across values. This motivates reporting class counts and, if needed, applying minimum-support thresholds or pooled-variance stabilizers.

Finally, the shrinkage level \(\alpha_k\) trades bias for conditioning. Stronger shrinkage produces more stable inverses in high dimension and small sample regimes, while weaker shrinkage preserves finer geometry when sufficient data are available.

\section{Broader impacts}
  \label{app:broader-impacts}

  \paragraph{Positive impacts.}
  The proposed trust score is intended to make conditional generative models
  more deployable in scientific settings where target real samples are
  unavailable, particularly biological imaging where wet-lab validation of
  \emph{in silico} predictions is expensive. By providing a per-sample,
  per-condition reliability estimate, the score can prevent unreliable synthetic
  generations from being treated as evidence in downstream analyses, reducing
  the risk of misleading scientific conclusions and saving experimental
  resources by prioritizing high-trust predictions for validation.

\paragraph{Potential negative impacts.}
Any post-hoc filtering rule can be misused to cherry-pick generations that look plausible without being faithful to the requested condition --- our score mitigates but does not eliminate this risk, and we encourage users to report acceptance rates and per-condition trust calibration alongside any selected sample sets.

\paragraph{Dual-use considerations.} The score itself is a diagnostic and does not improve the generative capabilities of the underlying model; it does not lower the barrier to producing higher-quality synthetic faces or biological images relative to running the underlying generator alone.

\section{Implementation details}
  \label{app:impl-details}

  \paragraph{Architecture.}
  SiT-B/2 \citep{ma2024sit} backbone (12 layers, 768 hidden dim, 12 heads),
  patch size 2 over a $32{\times}32$ VAE latent (256 patches), stable-diffusion
  SD-VAE-MSE encoder/decoder \citep{rombach2022latentdiff} (frozen). Conditioning
  is per-attribute embedding summed with the timestep embedding. The REPA
  projector is a 3-layer MLP $768\to2048\to2048\to d_h$ inserted at encoder
  depth $\ell=8$, where $d_h\in\{1024,1152,384\}$ for DINOv3 / SigLIP /
  OpenPhenom respectively.

  \paragraph{Training.}
  AdamW with $\mathrm{lr}=10^{-4}$, weight decay $0$, $1000$ linear warmup
  steps, gradient clip $1.0$, $400{,}000$ optimizer steps, global batch size
  $128$ ($16$ per GPU $\times$ 8 GPUs, DDP, \texttt{bf16} mixed precision),
  EMA on the generator with decay $0.9999$. Class-conditional dropout
  $p=0.1$ for classifier-free-guidance training; for marginal/held-out runs
  we additionally use the compositional regularizer with weight $1.0$ applied
  to $25\%$ of batches at $t>0.7$.

  \paragraph{Data and preprocessing.}
  \celeba: $256{\times}256$ centre-cropped RGB, random horizontal flip,
  $[-1,1]$ normalization; conditioning on the 4 binary attributes
  \textsc{Male / Smiling / Blond\_Hair / Eyeglasses}. \rxrx: $512{\times}512$
  6-channel raw fluorescence resized to $256{\times}256$ via numpy bilinear,
  no flips, no per-channel normalization (raw uint16 scaled to $[0,1]$);
  conditioning on (cell\_type\_id, sirna\_id).

  \paragraph{Sampling.}
  $250$-step DDIM-style integration of the SiT flow with $t_{\mathrm{cutoff}}=0.04$;
  classifier-free-guidance scale $1.0$. We draw $1000$
  samples per condition for \celeba (16 conditions) and $100$ per condition for
  \rxrx (50 conditions in the canonical subset).

  \paragraph{Trust-score calibration.}
  Encoder features are taken as the mean over patch tokens (DINOv3 / SigLIP)
  or as the model's pooled output (OpenPhenom). Covariance estimates use
  Ledoit--Wolf shrinkage \citep{ledoit2004shrinkage} with $\varepsilon_P=10^{-5}$ ridge added before inversion.
  Calibration constants $(\hat m_R,\hat s_R,\hat m_{k,t},\hat s_{k,t})$ are
  estimated on the same real training-support split used for fitting; full
  formulas are in \Cref{app:estimation-details}.
  
\section{During-generation \transl and compute details}
\label{app:compute-details}

The \transl is trained to preserve the aspects of the \fext representation used by the Mahalanobis trust score. A standard cosine-alignment loss is sufficient to align dominant directions, but it need not preserve the residual geometry that controls whitened distances and prototype comparisons. This can distort the relative scale of directions that are weak in cosine similarity but important after covariance normalization. We therefore train \(g_{\phi,\ell}\) with a whitened-geometry objective: a tempered feature-matching term in feature-whitened coordinates, together with mean and covariance matching penalties in normalized feature space. The whitening transform is computed from the real training features, using the same covariance geometry as the trust score. This makes the mapped features compatible with the post-generation scoring rule, so that any difference between post-generation and during-generation evaluation reflects feature acquisition rather than a change in metric.

Specifically, we train the \transl from an internal model feature \(z_{\ell,\tau}(x)\) to the \fop feature space. Let
\[
p_\phi(x)=g_{\phi,\ell}(z_{\ell,\tau}(x)),
\qquad
\bar p_\phi(x)=\operatorname{norm}(p_\phi(x)),
\qquad
\bar y(x)=\operatorname{norm}(\fop(x)).
\]
Let \(\mu_t\) and \(\Sigma_t\) denote the empirical mean and covariance of normalized pretrained features on real training data. We use
\[
\mathcal L_{\mathrm{whitgeom}}
=
\lambda_{\mathrm{whit}}
\mathbb E_x
\left[
\left\|
W\left(\bar p_\phi(x)-\bar y(x)\right)
\right\|_2^2
\right]
+
\lambda_{\mathrm{geom}}
\left(
\frac{\|\mu_p-\mu_t\|_2^2}{\|\mu_t\|_2^2+\epsilon}
+
\frac{\|\Sigma_p-\Sigma_t\|_F^2}{\|\Sigma_t\|_F^2+\epsilon}
\right),
\]
where \(\mu_p\) and \(\Sigma_p\) are the batch mean and covariance of mapped normalized features. The matrix \(W\) is a tempered whitening transform computed from the feature covariance, with exponent \(\gamma=0.75\) and condition-number cap \(\kappa=1000\). In all experiments we use
\[
\lambda_{\mathrm{whit}}=1.0,
\qquad
\lambda_{\mathrm{geom}}=0.1.
\]

\subsection{Compute save from early abstention}

Let \(C_{\mathrm{den}}(\ell,\tau)\) denote the cost of running the generator up to layer-step pair \((\ell,\tau)\), \(C_{\mathrm{dec}}\) the decoder cost, \(C_{\mathrm{enc}}\) the \fop cost, and \(C_{g_\phi}\) the \transl cost. Post-generation scoring costs
\[
C_{\mathrm{post}}
=
C_{\mathrm{den}}(\mathrm{full})
+
C_{\mathrm{dec}}
+
C_{\mathrm{enc}}
+
C_{\mathrm{score}},
\]
whereas during-generation scoring costs
\[
C_{\ell,\tau}^{\mathrm{map}}
=
C_{\mathrm{den}}(\ell,\tau)
+
C_{g_\phi}
+
C_{\mathrm{score}}.
\]
Thus early abstention avoids the remaining denoising computation, the VAE decoder, and the \fext.

\subsection{Hardware, runtime, and total compute}
  \label{app:compute-resources}

  All experiments were run on a single internal cluster node with
  8$\times$NVIDIA A100 80GB GPUs (DDP, NCCL backend, mixed-precision
  \texttt{bf16}). Generator training uses 400{,}000 optimizer steps with
  \texttt{lr=1e-4}, \texttt{warmup\_steps=1000}, \texttt{gradient\_clip\_val=1.0},
  and global batch size 128 (16 per GPU $\times$ 8 GPUs).

  \paragraph{Per-experiment cost.} Generations are performed with the 250-step DDIM-style
  trajectory: $1000$ samples per condition for CelebA (16 conditions $\Rightarrow$
  $16\text{k}$ samples) and $100$ per condition for RxRx1 (50 conditions
  $\Rightarrow$ $5\text{k}$ samples). Per-step generator FLOPs are
  $\approx46$\,GFLOPs (SiT-B/2, 256 patches), so a full 250-step trajectory is
  $\approx 11.5$\,TFLOPs per sample; the REPA projector adds
  $\approx 4$\,GFLOPs (\textless 0.04\% overhead).

\paragraph{Trust-evaluation cost.} Once features are cached, all reported trust evaluations run on a single GPU in under an hour per (model, scoring space, regime).

\section{OpenPhenom feature diagnostic}
\label{app:openphenom-diagnostic}

While OpenPhenom-aligned REPA models achieve the best generation quality on RxRx1 as measured by post-hoc DINOv3 KID (\Cref{tab:condition-correlation}), trust scoring in the OpenPhenom-aligned feature space fails ($\rho \approx 0.05$). We conducted a systematic diagnostic to understand this failure.

\paragraph{Root cause: collapsed within-class covariance.}
OpenPhenom features have an effective within-class dimensionality of ${\approx}4$ in 2304 dimensions, compared to ${\approx}16$ for DINOv3. The top principal component captures $61.5\%$ of within-class variance, and this dominant direction is orthogonal to both cell type and perturbation identity. The precision matrix therefore has condition number ${\approx}21\text{M}$, amplifying ${\approx}2300$ near-zero variance directions.

\paragraph{Mahalanobis scoring destroys signal.}
The precision-weighted signal-to-noise ratio (Mahalanobis SNR) is $0.11$ for OpenPhenom vs.\ $0.24$ for DINOv3. The Mahalanobis/Euclidean SNR ratio is $0.73$ for OpenPhenom (signal-destructive) vs.\ $1.55$ for DINOv3 (signal-enhancing). This places OpenPhenom features in the weak-robustness regime of \Cref{prop:interaction-robustness}: the noise term $2\max_{u \neq t}|(\eta_{k,u} - \eta_{k,t})^\top P_k \lambda|$ exceeds the signal term $\min_{u \neq t}\|\eta_{k,u} - \eta_{k,t}\|_{P_k}^2$ for 77.8\% of samples on the siRNA attribute (\Cref{tab:prop4-test}).

\begin{table}[t]
\centering
\caption{\Cref{prop:interaction-robustness} robustness margin test across encoders. Violation rate = fraction of samples where noise exceeds signal in the Mahalanobis margin. N/S = noise-to-signal ratio.}
\label{tab:prop4-test}
\small
\begin{tabular}{l ccc ccc}
\rowcolor{gray!20}\toprule
 & \multicolumn{3}{c}{cell\_type attribute} & \multicolumn{3}{c}{siRNA attribute} \\
\cmidrule(lr){2-4} \cmidrule(lr){5-7}
Encoder & Violation & N/S & Slack & Violation & N/S & Slack \\
\midrule
DINOv3 & 0.0\% & 0.10 & 56.4 & 69.5\% & 1.21 & $-$4.7 \\
SigLIP & 0.0\% & 0.07 & 63.1 & 70.5\% & 1.26 & $-$5.3 \\
OpenPhenom & 0.7\% & 0.18 & 32.5 & 77.8\% & 1.48 & $-$6.3 \\
\bottomrule
\end{tabular}
\end{table}

\paragraph{Preprocessing drives the geometry.}
An isolation experiment applying OpenPhenom-style preprocessing (InstanceNorm + per-channel encoding) to DINOv3 eliminates the cross-space anti-correlation ($\rho$: $-0.54 \to +0.12$) but simultaneously destroys DINOv3's same-space scoring power ($\rho$: $0.93 \to 0.43$); see \Cref{tab:preprocessing-isolation}. This confirms that the issue is a fundamental consequence of OpenPhenom's channel-agnostic architecture: InstanceNorm removes per-image intensity variation, and per-channel processing prevents cross-channel interaction, concentrating variance into non-semantic directions.

\begin{table}[t]
\centering
\caption{Effect of OpenPhenom-style preprocessing on DINOv3 features. Cross-space = OP trust vs.\ DINOv3 KID; same-space = DINOv3 trust vs.\ DINOv3 KID. IN = InstanceNorm.}
\label{tab:preprocessing-isolation}
\small
\begin{tabular}{l cc}
\rowcolor{gray!20}\toprule
DINOv3 preprocessing variant & $\rho$(cross-space) & $\rho$(same-space) \\
\midrule
Standard (to\_rgb) & $-$0.54 & 0.93 \\
+ InstanceNorm & $-$0.60 & 0.80 \\
Dual 3ch (no IN) & $-$0.65 & 0.77 \\
Dual 3ch + IN & +0.16 & 0.43 \\
Per-channel (no IN) & $-$0.40 & 0.43 \\
Per-channel + IN & +0.12 & 0.56 \\
\bottomrule
\end{tabular}
\end{table}

\paragraph{Contrastive models are robust.}
A control experiment with post-hoc SigLIP features on the same RxRx1 images achieves $\rho = 0.80$ cross-space (SigLIP trust vs.\ DINOv3 KID), despite SigLIP having even higher pairwise cosine similarity ($0.979$ vs.\ $0.95$ for OpenPhenom). The problem is isolated to OpenPhenom's MAE feature geometry, not to high feature similarity per se (\Cref{tab:encoder-comparison}).

\begin{table}[t]
\centering
\caption{Cross-encoder comparison on RxRx1 (50 conditions, same generated images from REPA-OpenPhenom model). Same-space = trust and KID in the same encoder; cross-space = encoder trust vs.\ DINOv3 KID.}
\label{tab:encoder-comparison}
\small
\begin{tabular}{l cc cc}
\rowcolor{gray!20}\toprule
 & \multicolumn{2}{c}{Same-space} & \multicolumn{2}{c}{Cross-space (vs.\ DINOv3 KID)} \\
\cmidrule(lr){2-3} \cmidrule(lr){4-5}
Encoder & $\rho$(trust) & $\rho$(real.) & $\rho$(trust) & $\rho$(real.) \\
\midrule
DINOv3 & 0.93 & 0.94 & --- & --- \\
SigLIP & 0.79 & 0.65 & 0.80 & 0.84 \\
OpenPhenom & 0.45 & 0.40 & $-$0.54 & $-$0.48 \\
\bottomrule
\end{tabular}
\end{table}

\paragraph{Feature geometry comparison.}
\Cref{tab:feature-geometry} summarizes the within-class covariance geometry that underlies the scoring failure.

\begin{table}[t]
\centering
\caption{Within-class covariance geometry across encoders (RxRx1 real features).}
\label{tab:feature-geometry}
\small
\begin{tabular}{l ccc}
\rowcolor{gray!20}\toprule
Metric & DINOv3 & SigLIP & OpenPhenom \\
\midrule
Feature dimension & 1024 & 1152 & 2304 \\
Effective dim (within-class) & 16.0 & 12.2 & 4.0 \\
Top-1 PC fraction & 0.19 & 0.23 & 0.62 \\
Precision condition number & 5.1M & 1.5M & 21.1M \\
Euclidean SNR & 0.15 & 0.11 & 0.15 \\
Mahalanobis SNR & 0.24 & 0.28 & 0.11 \\
Mahal/Euclid ratio & 1.55 & 2.52 & 0.73 \\
\bottomrule
\end{tabular}
\end{table}

\paragraph{Practical implication.}
OpenPhenom features are discriminative for classification ($\eta^2 \approx 0.41$ for cell type, comparable to DINOv3's $0.36$) but violate the geometric assumptions required for Mahalanobis-based distributional scoring. This highlights a key distinction: classification requires only hyperplane separability in a few discriminative dimensions, while trust scoring requires meaningful geometry across the full feature space. For trust evaluation on RxRx1, we recommend using post-hoc contrastive features (DINOv3 or SigLIP) rather than MAE-based domain-specific encoders.

\section{RxRx1 50-condition evaluation subset}
\label{app:rxrx1-subset}

All RxRx1 results in the main text are reported on a balanced 50-condition subset (25 seen + 25 unseen, covering all 4 cell types) rather than the full 4552-condition space. This appendix documents why the subset is constructed the way it is, and why a naive ``top-$n$ by sample count, stratified by cell type'' construction fails.

\paragraph{Constraints.}
A usable subset must satisfy three requirements simultaneously. (i)~\emph{Sample sufficiency}: each retained condition must have enough real samples for a per-condition KID bootstrap to be stable. (ii)~\emph{Cell-type coverage}: all four cell types must appear in the subset, so that trust-vs-$\Delta$KID correlations are not driven by a single biological context. (iii)~\emph{Discriminability}: each condition must be distinguishable from the rest under a class-balanced linear probe on real feature space, otherwise trust scoring has nothing to rank, and apparent correlations are dominated by class-imbalance artifacts rather than genuine per-condition structure. The held-out training regime (\Cref{sec:experiments}) adds a fourth constraint: the test set should include enough unseen conditionings for it to be stress-testing, the 100 most-frequent cell~type\,$\times$\,perturbation pairs removed from training.

\paragraph{Failure mode of the naive construction.}
A first-pass subset was built by selecting the top-$n$ conditions by real-sample count, stratified across cell types (7/6/6/6 seen across ct=0/1/2/3 and 9/8/8 unseen across ct=0/1/2). A cross-validated class-balanced linear probe on real DINOv3 features, run on this subset, exposed three problems.

\begin{enumerate}[nosep]
    \item \emph{Filler rows.} 13 of the 25 seen conditions (all ct=0 and ct=2 non-controls) had only $n{=}14$ real samples each. This is below any stable per-condition KID bootstrap. Under a class-balanced probe on the 50-class subproblem, these rows collapse to top-1 ${\approx}0$---the probe cannot distinguish them from each other or from the rest of the catalog. They contribute no usable ranking signal.
    \item \emph{No unseen functional diversity.} The unseen part of the test set contained only control wells, even though several non-control held-out perturbations were available in held-out set of the diffusion training. The support-shift test therefore covered a single class of perturbations rather than the diversity the held-out set was designed to provide.
    \item \emph{Imbalance-probe confound.} A class-imbalanced probe resulted in a class-frequency artifact: controls dominate the per-class sample counts. Under a class-balanced probe restricted to the 50-class subproblem, the controls collapse and the genuinely discriminable rows are the non-control siRNAs together with a smaller subset of controls. Any subset that does not prioritize these rows is measuring imbalance, not trust
\end{enumerate}

\paragraph{Corrected construction.}
The subset used throughout the paper is built to satisfy all four constraints by explicitly targeting discriminable rows under a class-balanced probe, rather than largest-$n$ rows.

\begin{itemize}[nosep]
    \item \emph{Seen arm (25 conditions, $n \geq 23$, median $n{=}32$).} 20 ct=1 non-control siRNAs selected by real-sample count (all at $n{=}32$), plus 5 ct=3 controls at $n \approx 22$--$24$. The ct=1 non-controls carry the per-condition signal identified by the class-balanced probe; the ct=3 rows satisfy the cell-type-coverage constraint (see below).
    \item \emph{Unseen arm (25 conditions, $n \geq 27$, median $n \approx 52$).} 8 ct=0 controls (largest by $n$), 7 ct=1 non-control held-out siRNAs (349, 350, 351, 352, 363, 364, 401), 2 ct=1 controls (largest by $n$), and 8 ct=2 controls (largest by $n$). This is the first configuration in which the unseen arm contains non-control perturbations, giving the support-shift evaluation functional rather than purely positional variety.
\end{itemize}

\paragraph{Cell-type coverage notes.}
Two cell-type asymmetries are worth making explicit.
\begin{itemize}[nosep]
    \item \emph{ct=0 and ct=2 appear only in the unseen arm.} Their non-control conditions cap at $n{=}14$, which fails the sample-sufficiency constraint, and all of their high-$n$ controls are listed in \texttt{RXRX1\_HELDOUT\_PAIRS}, so they cannot populate the seen arm. Subset-level cell-type coverage (each cell type appears somewhere in the 50 conditions) is therefore retained, even though arm-level coverage is not---the alternative would require reinstating the 13 filler rows.
    \item \emph{ct=3 is always retained.} ct=3 has no held-out pairs at all (so it cannot populate the unseen arm) and its non-control wells max out at $n{=}6$ (unusable). The only ct=3 data with sufficient sample counts are the 31 control wells (\texttt{sirna\_id} 1108--1138, max $n{=}24$). The seen arm therefore includes the 5 largest ct=3 controls unconditionally, as the only way to honor the all-cell-types constraint.
\end{itemize}

\paragraph{Relationship to the naive subset.}
29 of the 50 final conditions are also present in the naive construction (18 in the unseen arm, 11 in the seen arm), and no condition migrates between arms. The corrected subset is therefore a targeted sharpening of the original---dropping filler rows and adding discriminable rows---rather than a rebuild, so results under the two constructions remain directly comparable on their shared conditions.

\section{CellProfiler validation details on RxRx1}
\label{app:cp-validation}

This appendix details the CellProfiler (CP) feature pipeline used for the morphology-space validation in \Cref{sec:cp-validation} and reports the per-feature correlation breakdown and uninformative-feature sanity check.

\paragraph{Decile-binning headline numbers.}
\Cref{tab:cp-decile-downstream} summarizes the curves of \Cref{fig:cp-decile-downstream} as the bin-0 vs.\ bin-9 micro-accuracy (the headline contrast referenced in the main text), per (model, target).

\begin{table}[h]
\centering
\caption{CP-space decile downstream classification on RxRx1 (\Cref{fig:cp-decile-downstream} headline numbers; 50-condition subset, marginal models, SigLIP scoring). Micro-accuracy of bin-0 vs.\ bin-9 classifiers trained on per-decile gen samples and tested on the canonical-50 real pool. ``Trust spread'' = bin~0 $-$ bin~9; positive means higher-trust deciles produce more useful training data. Random-baseline micro-accuracy is flat (std $\leq 0.005$) and matches the bin-0/bin-9 mean.}
\label{tab:cp-decile-downstream}
\small
\begin{tabular}{ll cccc}
\rowcolor{gray!20}\toprule
Model & Target & bin 0 & bin 9 & Trust spread & Random mean \\
\midrule
Vanilla       & celltype & 0.900 & 0.841 & $+$0.058 & 0.893 \\
Vanilla       & combo    & 0.127 & 0.109 & $+$0.018 & 0.130 \\
REPA (DINOv3) & celltype & 0.905 & 0.814 & $+$0.091 & 0.887 \\
REPA (DINOv3) & combo    & 0.148 & 0.107 & $+$0.041 & 0.132 \\
REPA (SigLIP) & celltype & 0.897 & 0.803 & $+$0.094 & 0.887 \\
REPA (SigLIP) & combo    & 0.146 & 0.117 & $+$0.029 & 0.131 \\
\bottomrule
\end{tabular}
\end{table}

\begin{figure}[h]
\centering
\begin{subfigure}[t]{\textwidth}
    \centering
    \includegraphics[width=\textwidth]{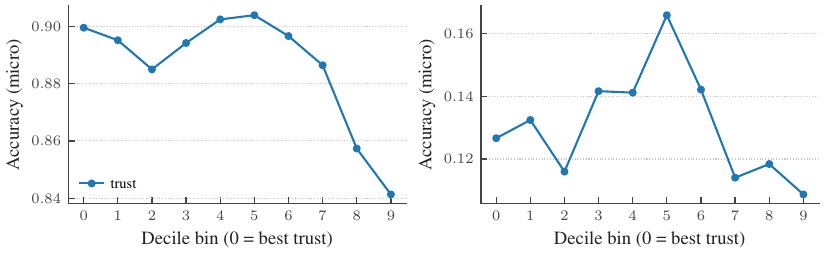}
    \caption{Vanilla marginal.}
    \label{fig:cp-decile-vanilla}
\end{subfigure}
\\[0.4em]
\begin{subfigure}[t]{\textwidth}
    \centering
    \includegraphics[width=\textwidth]{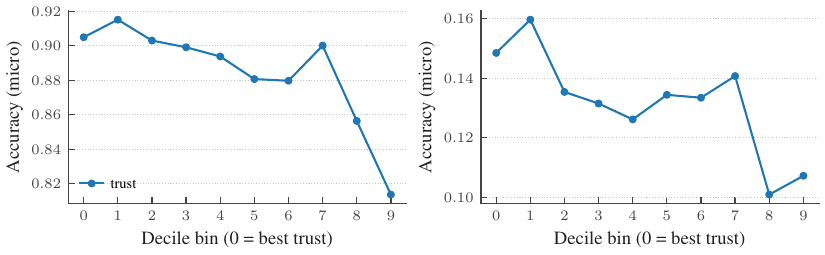}
    \caption{REPA-DINOv3 marginal.}
    \label{fig:cp-decile-repa}
\end{subfigure}
\\[0.4em]
\begin{subfigure}[t]{\textwidth}
    \centering
    \includegraphics[width=\textwidth]{figures/rxrx1_cp_decile_repa_siglip_marginal.pdf}
    \caption{REPA-SigLIP marginal.}
    \label{fig:cp-decile-repa-siglip}
\end{subfigure}
\caption{\textbf{Full CP-space decile downstream classification on RxRx1 (kept-621 features, SigLIP trust scoring).} Each row is one marginal model; \emph{left}: 4-way celltype accuracy by decile, \emph{right}: 50-way combo accuracy. Solid blue: trust-ranked decile. The REPA-SigLIP row is highlighted in the main text.}
\label{fig:cp-decile-downstream}
\end{figure}

\paragraph{Per-sample centroid distance (full table).}
\Cref{tab:cp-centroid-full} reproduces \Cref{tab:cp-centroid} from the main text with the trust-accepted sample count $n_{\mathrm{trust}}$ and the bootstrap 95\% CI on $\Delta d$ (2000 reps, stratified). Negative $\Delta d$ with a CI bounded above by zero is read as ``trust-selected samples are morphologically closer to real''; the vanilla-unseen row is the only cell whose CI grazes zero.

\begin{table}[h]
\centering
\caption{Full version of \Cref{tab:cp-centroid}: per-sample centroid distance on RxRx1 (kept-621 CP features) with trust-accepted counts and bootstrap CIs. Random pool is condition-matched (combo-stratified) and matched in size to $n_{\mathrm{trust}}$ per arm.}
\label{tab:cp-centroid-full}
\small
\begin{tabular}{ll r r ccc c}
\rowcolor{gray!20}\toprule
Model & Arm & $n_{\mathrm{trust}}$ & Accept\% & $\bar d_{\mathrm{trust}}$ & $\bar d_{\mathrm{baseline}}$ & $\Delta d$ & 95\% CI \\
\midrule
Vanilla       & seen   & 1657 & 66.3 & 18.63 & 18.93 & $-$0.300 & $[-0.570,\, -0.020]$ \\
Vanilla       & unseen &  547 & 21.9 & 20.92 & 21.52 & $-$0.601 & $[-1.211,\, +0.028]$ \\
REPA (DINOv3) & seen   & 1530 & 61.2 & 18.31 & 18.90 & $-$0.586 & $[-0.854,\, -0.314]$ \\
REPA (DINOv3) & unseen &  524 & 21.0 & 20.19 & 21.21 & $-$1.018 & $[-1.663,\, -0.394]$ \\
REPA (SigLIP) & seen   & 1640 & 65.7 & 18.38 & 18.88 & $-$0.496 & $[-0.750,\, -0.241]$ \\
REPA (SigLIP) & unseen &  588 & 23.5 & 22.13 & 22.97 & $-$0.844 & $[-1.489,\, -0.167]$ \\
\bottomrule
\end{tabular}
\end{table}

\paragraph{Feature pipeline.}
Starting from the raw \texttt{Image.csv} produced by CellProfiler over the canonical-50 real subset, we apply the following deterministic pipeline. (i)~Drop metadata columns (regex matches against \texttt{metadata}, \texttt{filename}, \texttt{pathname}, \texttt{imagenumber}, \texttt{objectnumber}, \texttt{executiontime}, \texttt{moduleerror}, \texttt{series}, \texttt{frame}, \texttt{group\_*}, \texttt{url}), leaving 2467 numeric features. (ii)~Drop NaN rows and restrict to the canonical 50 cell$\times$siRNA pairs (2061 real samples). (iii)~Apply a variance threshold ($\mathrm{var} < 10^{-5}$) and an outlier filter (drop any column with a real-data $|z| > 5$), giving the \emph{kept-621} set used throughout the centroid and per-feature correlation analyses. (iv)~Real-fit \texttt{StandardScaler} is reused across all readouts.

For the uninformative-feature sanity check we use a less aggressive variant — \emph{unfiltered} — which keeps all 2415 columns that survive metadata removal and zero-variance pruning, so genuinely noise-only features can surface.

\paragraph{Per-feature correlation construction.}
For each (model, arm, feature $f$),
\[
r_{\mathrm{trust}}(f) = \mathrm{Pearson}\!\big(\mu_{\mathrm{trust}}(f),\,\mu_{\mathrm{real}}(f)\big),
\qquad
r_{\mathrm{rand}}(f) = \mathrm{Pearson}\!\big(\mu_{\mathrm{rand}}(f),\,\mu_{\mathrm{real}}(f)\big),
\]
where $\mu_\bullet(f)$ is the per-combo mean of $f$ over the corresponding gen subset (trust-accepted, or combo-stratified random averaged over 5 seeds), and the Pearson correlation is taken across the arm's combos. Higher $r$ means the feature's per-combo profile in gen tracks real well; $\Delta(f) := r_{\mathrm{trust}}(f) - r_{\mathrm{rand}}(f) > 0$ means trust selection sharpens cross-combo correspondence on that feature.

\paragraph{Top-correlated features under trust (kept-621).}
Across all six (model, arm) cells the same families recur at the top of $r_{\mathrm{trust}}$ ($r \geq 0.90$): per-cell and per-cytoplasm texture autocorrelation at scales 3/5/10 (\texttt{Cells\_Texture\_Correlation\_OrigGray\_*}), image-level granularity (\texttt{Granularity\_2/4\_OrigGray}), total cell-segmentation area (\texttt{AreaOccupied\_AreaOccupied\_IdentifySecondaryObjects}), and cytoplasm shape regularity variance (\texttt{StDev\_Cytoplasm\_AreaShape\_FormFactor}). \Cref{tab:cp-top-features} gives a representative example for the REPA-DINOv3 unseen arm.

\begin{table}[t]
\centering
\caption{Top-5 features by $r_{\mathrm{trust}}$ on REPA-DINOv3, RxRx1 unseen arm (kept-621). $\Delta = r_{\mathrm{trust}} - r_{\mathrm{rand}}$.}
\label{tab:cp-top-features}
\small
\begin{tabular}{l ccc}
\rowcolor{gray!20}\toprule
Feature & $r_{\mathrm{trust}}$ & $r_{\mathrm{rand}}$ & $\Delta$ \\
\midrule
\texttt{Median\_Cells\_Texture\_Correlation\_OrigGray\_5\_02\_256}     & $+$0.937 & $+$0.926 & $+$0.011 \\
\texttt{Median\_Cells\_Texture\_Correlation\_OrigGray\_10\_02\_256}    & $+$0.935 & $+$0.836 & $+$0.099 \\
\texttt{Median\_Cytoplasm\_Texture\_Correlation\_OrigGray\_5\_02\_256} & $+$0.934 & $+$0.863 & $+$0.071 \\
\texttt{Mean\_Cells\_Texture\_Correlation\_OrigGray\_5\_03\_256}       & $+$0.934 & $+$0.893 & $+$0.041 \\
\texttt{Mean\_Cells\_Texture\_Correlation\_OrigGray\_5\_01\_256}       & $+$0.932 & $+$0.921 & $+$0.011 \\
\bottomrule
\end{tabular}
\end{table}

\paragraph{Uninformative-feature sanity check (unfiltered).}
Repeating the analysis on the unfiltered 2415-feature set surfaces features for which even random gen has no per-combo correspondence with real. Sorting by $|r_{\mathrm{rand}}|$ ascending isolates these. \Cref{tab:cp-uninformative} shows the bot-10 for the vanilla unseen arm; the same pattern appears across all six (model, arm) cells.

\begin{table}[t]
\centering
\caption{Bot-10 features by $|r_{\mathrm{rand}}|$ on Vanilla, RxRx1 unseen arm (unfiltered 2415-feature set). Random gen has no per-combo correspondence with real on these features ($|r_{\mathrm{rand}}| \approx 0$), consistent with their interpretation as uninformative under any selection.}
\label{tab:cp-uninformative}
\small
\begin{adjustbox}{width=\linewidth}
\begin{tabular}{l cc l}
\rowcolor{gray!20}\toprule
Feature & $r_{\mathrm{trust}}$ & $r_{\mathrm{rand}}$ & Family \\
\midrule
\texttt{Mean\_IdentifyPrimaryObjects\_Children\_..\_Count}             & $-$0.000 & $-$0.000 & tautological count \\
\texttt{StDev\_IdentifyPrimaryObjects\_Children\_..\_Count}            & $-$0.000 & $-$0.000 & tautological count \\
\texttt{StDev\_Nuclei\_Texture\_Variance\_OrigGray\_5\_01\_256}        & $-$0.086 & $+$0.001 & texture variability \\
\texttt{Median\_Nuclei\_RadialDistribution\_RadialCV\_..\_4of4}        & $-$0.047 & $+$0.001 & outermost-shell radial CV \\
\texttt{Median\_Cells\_Granularity\_15\_OrigGray}                      & $-$0.262 & $-$0.001 & very-coarse granularity \\
\texttt{StDev\_Nuclei\_AreaShape\_Zernike\_9\_7}                       & $-$0.269 & $-$0.002 & high-order Zernike \\
\texttt{Median\_Cytoplasm\_Texture\_InfoMeas1\_OrigGray\_10\_01\_256}  & $+$0.052 & $-$0.004 & texture moment \\
\texttt{Median\_Cytoplasm\_AreaShape\_FormFactor}                       & $+$0.127 & $-$0.004 & shape regularity \\
\texttt{Mean\_Cells\_AreaShape\_Zernike\_7\_1}                         & $+$0.282 & $-$0.004 & high-order Zernike \\
\texttt{StDev\_Cells\_Neighbors\_AngleBetweenNeighbors\_5}             & $+$0.080 & $+$0.005 & neighbour angle \\
\bottomrule
\end{tabular}
\end{adjustbox}

\end{table}

The recurring uninformative families across all six (model, arm) cells are: tautological segmentation counts (every primary object has exactly one secondary child after segmentation, so the column is constant), very-coarse-scale granularity (dominated by well-level lighting), high-order Zernike moments (sample-noise-dominated rotational shape), outermost-ring radial CVs, and absolute object orientation (intrinsically arbitrary). None of these should correlate with biology, and $r_{\mathrm{rand}} \approx 0$ confirms they don't.

A side observation: a handful of features that are uninformative under random become \emph{strongly} anti-correlated under trust ($r_{\mathrm{trust}} \ll 0$ while $|r_{\mathrm{rand}}| \approx 0$). The clearest example is \texttt{Mean\_Cytoplasm\_Location\_MaxIntensity\_Y\_OrigGray} on REPA-SigLIP unseen ($r_{\mathrm{rand}} = +0.005$, $r_{\mathrm{trust}} = -0.475$): image-level position features that random gen treats as noise become systematically biased by trust selection. This is the same pose/position bias visible in the kept-621 anti-correlation list.

\section{Graded compositional shift on CelebA}
\label{app:graded-shift}

A natural follow-up question to the post-generation ordering result is whether unseen conditionings should be treated as uniformly unreliable. CelebA held-out gives this question a sharp form: each queried condition has a natural notion of shift severity given by its minimum Hamming distance to the observed training support, and we can ask whether the per-condition mean trust score follows the same gradient.

\Cref{fig:celeba-scatter} shows the answer. The five seen (single-attribute) conditions cluster at low $\Delta$KID and low trust; unseen conditions spread outward as the Hamming distance from support grows. Crucially, this is not a binary seen-vs.-unseen effect: some unseen conditions at small Hamming distance achieve quality comparable to seen ones, and the score correctly assigns them correspondingly favorable trust. Conversely, larger-distance compositions are both harder to generate and assigned worse trust.

This behavior matters for the intended use case. Under compositional generalization, the relevant question is not simply whether a request is outside training support, but how severe that shift is and whether the resulting sample should still be trusted. The score captures this graded structure directly, rather than collapsing all unseen combinations into a single failure category.

\begin{figure}[h]
\centering
\includegraphics[width=0.65\textwidth]{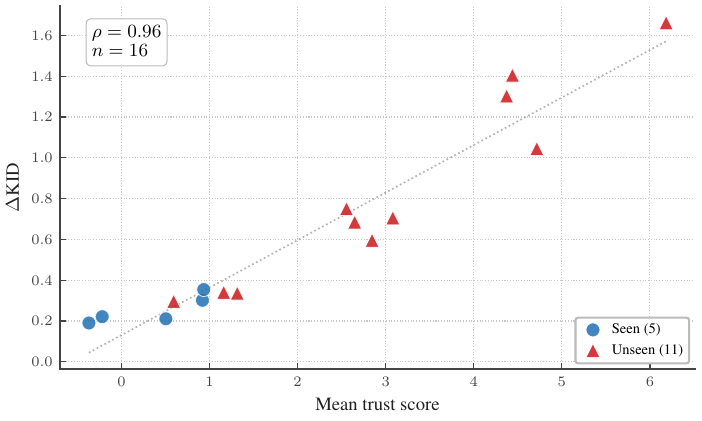}
\caption{\textbf{CelebA stress-test (REPA DINOv3, DINOv3 scoring): per-condition trust vs.\ $\Delta$KID.} Each point is one of 16 conditions. The 5 seen (single-attribute) conditions cluster at low $\Delta$KID; unseen conditions with more active attributes drift further from training support and exhibit higher $\Delta$KID. Trust scores track this degradation ($\rho = 0.96$).}
\label{fig:celeba-scatter}
\end{figure}

\Cref{tab:fpr95-selection-ablation} compares our calibrated Mahalanobis trust score (\Cref{sec:method}) against three alternative scorers on the same DINOv3 mean-patch feature space, under the identical P95-real-threshold selection protocol used for \Cref{tab:fpr95-selection}. \emph{Linear probe} fits one logistic regression per attribute on real DINOv3 features and scores a generated sample by the summed free energy $-\sum_a \mathrm{logsumexp}(z_a(x))$. \emph{kNN (per-attr.)} scores by the sum, over attributes, of cosine distance from the sample to its 5-th nearest neighbour inside the target-class real subset. \emph{CLIP} (CelebA-only) scores by $-\cos(\text{image}, \text{prompt})$ over a fixed template across the 16 attribute combos.

The picture is consistent with the takeaway in \Cref{sec:fpr95}. All three alternative scorers produce near-degenerate P95-real thresholds — the generated trust scores sit almost entirely below the 95\,th-percentile real threshold, yielding $\approx\!100\%$ acceptance — so they are unusable as abstention rules even when their ranking happens to be informative. kNN recovers a Mahalanobis-comparable condition-level correlation but cannot translate it into a calibrated filter; the linear probe and CLIP alignment collapse on both axes. Our Mahalanobis score is the only construction here that yields both non-degenerate acceptance and substantial KID improvement, which is what the deployment-time abstention story requires. The Mahalanobis rows in \Cref{tab:fpr95-selection-ablation} are reproduced from \Cref{tab:fpr95-selection,tab:fpr95-selection-full}; held-out rows match the main text and full-support rows are kept here as sanity checks.

\clearpage

\begin{table}[H]
\centering
\caption{Scorer ablation: P95-real-threshold sample selection across four alternative trust scorers on the same DINOv3 meanpatch feature space. Threshold, KID computation ($k{=}500$ shuffled subsampling), and random baseline are identical to \Cref{tab:fpr95-selection}. \emph{Mahalanobis} (ours) is the global-energy + per-attribute margin scorer in \Cref{sec:method}. $\rho$(T,$\Delta$KID) is the Spearman correlation between per-condition mean trust score and per-condition $\Delta$KID ($n{=}16$ for CelebA, $n{=}50$ for RxRx1). All three ablation scorers produce near-degenerate P95-real thresholds (gen scores sit almost entirely below the 95\,th-percentile real threshold), yielding $\approx 100\%$ acceptance; kNN still recovers Mahalanobis-level ranking correlation, while the linear probe and CLIP alignment collapse on both axes. Mahalanobis held-out rows are reproduced from \Cref{tab:fpr95-selection}; full-support rows are kept here as sanity checks.}
\label{tab:fpr95-selection-ablation}
\small
\begin{tabular}{ll ccccc}
\rowcolor{gray!20}\toprule
Model & Setting & Accept\% & KID$_{\text{trust}}$$\downarrow$ & KID$_{\text{baseline}}$$\downarrow$ & $\Delta$\%$\uparrow$ & $\rho$(T)$\uparrow$ \\
\midrule
\multicolumn{7}{l}{\emph{CelebA} --- Mahalanobis (ours)} \\
Vanilla          & full   & 85.6 & 0.266{\tiny$\pm$.005} & 0.288{\tiny$\pm$.012} & $+$7.5  & $+$0.87 \\
Vanilla          & held-out & 58.7 & 0.224{\tiny$\pm$.009} & 0.368{\tiny$\pm$.020} & $+$39.1 & $+$0.96 \\
REPA (DINOv3)    & full   & 87.9 & 0.163{\tiny$\pm$.005} & 0.194{\tiny$\pm$.004} & $+$15.6 & $+$0.83 \\
REPA (DINOv3)    & held-out & 55.4 & 0.228{\tiny$\pm$.010} & 0.401{\tiny$\pm$.031} & $+$43.1 & $+$0.96 \\
REPA (SigLIP)    & full   & 85.2 & 0.225{\tiny$\pm$.009} & 0.265{\tiny$\pm$.008} & $+$15.1 & $+$0.88 \\
REPA (SigLIP)    & held-out & 56.1 & 0.239{\tiny$\pm$.018} & 0.423{\tiny$\pm$.020} & $+$43.6 & $+$0.97 \\
\addlinespace
\multicolumn{7}{l}{\emph{CelebA} --- Linear probe (per-attribute energy)} \\
Vanilla          & full   & 100.0 & 0.292{\tiny$\pm$.011} & 0.294{\tiny$\pm$.008} & $+$0.9 & $-$0.29 \\
Vanilla          & held-out & 100.0 & 0.350{\tiny$\pm$.012} & 0.343{\tiny$\pm$.033} & $-$2.2 & $-$0.04 \\
REPA (DINOv3)    & full   & 100.0 & 0.192{\tiny$\pm$.008} & 0.194{\tiny$\pm$.003} & $+$1.0 & $-$0.27 \\
REPA (DINOv3)    & held-out & 100.0 & 0.388{\tiny$\pm$.015} & 0.409{\tiny$\pm$.020} & $+$5.1 & $+$0.21 \\
REPA (SigLIP)    & full   & 100.0 & 0.256{\tiny$\pm$.013} & 0.256{\tiny$\pm$.013} & $+$0.0 & $-$0.29 \\
REPA (SigLIP)    & held-out & 100.0 & 0.391{\tiny$\pm$.016} & 0.417{\tiny$\pm$.033} & $+$6.2 & $+$0.34 \\
\addlinespace
\multicolumn{7}{l}{\emph{CelebA} --- kNN (per-attr., 5-th NN summed)} \\
Vanilla          & full   & 99.9 & 0.292{\tiny$\pm$.013} & 0.302{\tiny$\pm$.012} & $+$3.2 & $+$0.89 \\
Vanilla          & held-out & 99.8 & 0.357{\tiny$\pm$.023} & 0.358{\tiny$\pm$.021} & $+$0.3 & $+$0.92 \\
REPA (DINOv3)    & full   & 99.9 & 0.188{\tiny$\pm$.007} & 0.187{\tiny$\pm$.010} & $-$0.5 & $+$0.83 \\
REPA (DINOv3)    & held-out & 99.7 & 0.403{\tiny$\pm$.012} & 0.396{\tiny$\pm$.026} & $-$1.7 & $+$0.88 \\
REPA (SigLIP)    & full   & 99.9 & 0.263{\tiny$\pm$.011} & 0.263{\tiny$\pm$.010} & $+$0.2 & $+$0.91 \\
REPA (SigLIP)    & held-out & 99.8 & 0.424{\tiny$\pm$.020} & 0.411{\tiny$\pm$.024} & $-$3.0 & $+$0.87 \\
\addlinespace
\multicolumn{7}{l}{\emph{CelebA} --- CLIP alignment (16 joint-combo prompts)} \\
Vanilla          & full   & 100.0 & 0.649{\tiny$\pm$.013} & 0.656{\tiny$\pm$.008} & $+$1.0 & $-$0.55 \\
Vanilla          & held-out & 100.0 & 0.729{\tiny$\pm$.016} & 0.720{\tiny$\pm$.024} & $-$1.3 & $+$0.60 \\
REPA (DINOv3)    & full   & 100.0 & 0.625{\tiny$\pm$.012} & 0.603{\tiny$\pm$.015} & $-$3.6 & $-$0.48 \\
REPA (DINOv3)    & held-out & 100.0 & 0.770{\tiny$\pm$.020} & 0.775{\tiny$\pm$.034} & $+$0.7 & $+$0.77 \\
REPA (SigLIP)    & full   & 100.0 & 0.631{\tiny$\pm$.025} & 0.613{\tiny$\pm$.011} & $-$2.8 & $-$0.65 \\
REPA (SigLIP)    & held-out & 100.0 & 0.733{\tiny$\pm$.011} & 0.751{\tiny$\pm$.030} & $+$2.5 & $+$0.73 \\
\midrule
\multicolumn{7}{l}{\emph{RxRx1} (50-condition subset) --- Mahalanobis (ours)} \\
Vanilla          & full    & 14.9 & 0.202{\tiny$\pm$.003} & 0.307{\tiny$\pm$.018} & $+$34.2 & $+$0.87 \\
Vanilla          & held-out &  4.4 & 0.196{\tiny$\pm$.006} & 0.323{\tiny$\pm$.013} & $+$39.4 & $+$0.90 \\
REPA (DINOv3)    & full    & 21.7 & 0.175{\tiny$\pm$.004} & 0.276{\tiny$\pm$.013} & $+$36.7 & $+$0.83 \\
REPA (DINOv3)    & held-out &  5.5 & 0.197{\tiny$\pm$.006} & 0.333{\tiny$\pm$.019} & $+$40.8 & $+$0.90 \\
REPA (SigLIP)    & full    & 18.4 & 0.190{\tiny$\pm$.002} & 0.301{\tiny$\pm$.007} & $+$37.1 & $+$0.84 \\
REPA (SigLIP)    & held-out &  6.1 & 0.181{\tiny$\pm$.003} & 0.312{\tiny$\pm$.019} & $+$42.0 & $+$0.87 \\
\addlinespace
\multicolumn{7}{l}{\emph{RxRx1} --- Linear probe (per-attribute energy)} \\
Vanilla          & full    & 100.0 & 0.304{\tiny$\pm$.015} & 0.312{\tiny$\pm$.015} & $+$2.4 & $-$0.09 \\
Vanilla          & held-out & 100.0 & 0.329{\tiny$\pm$.020} & 0.330{\tiny$\pm$.023} & $+$0.5 & $+$0.08 \\
REPA (DINOv3)    & full    & 100.0 & 0.278{\tiny$\pm$.019} & 0.274{\tiny$\pm$.019} & $-$1.3 & $+$0.15 \\
REPA (DINOv3)    & held-out & 100.0 & 0.351{\tiny$\pm$.019} & 0.354{\tiny$\pm$.013} & $+$0.8 & $+$0.10 \\
REPA (SigLIP)    & full    & 100.0 & 0.303{\tiny$\pm$.011} & 0.311{\tiny$\pm$.011} & $+$2.7 & $+$0.01 \\
REPA (SigLIP)    & held-out & 100.0 & 0.330{\tiny$\pm$.025} & 0.318{\tiny$\pm$.018} & $-$3.9 & $+$0.08 \\
\addlinespace
\multicolumn{7}{l}{\emph{RxRx1} --- kNN (per-attr., 5-th NN summed)} \\
Vanilla          & full    & 100.0 & 0.304{\tiny$\pm$.015} & 0.312{\tiny$\pm$.015} & $+$2.4 & $+$0.91 \\
Vanilla          & held-out & 100.0 & 0.329{\tiny$\pm$.020} & 0.330{\tiny$\pm$.023} & $+$0.5 & $+$0.88 \\
REPA (DINOv3)    & full    & 100.0 & 0.278{\tiny$\pm$.019} & 0.274{\tiny$\pm$.019} & $-$1.3 & $+$0.92 \\
REPA (DINOv3)    & held-out & 100.0 & 0.351{\tiny$\pm$.019} & 0.354{\tiny$\pm$.013} & $+$0.8 & $+$0.91 \\
REPA (SigLIP)    & full    & 100.0 & 0.303{\tiny$\pm$.011} & 0.311{\tiny$\pm$.011} & $+$2.7 & $+$0.90 \\
REPA (SigLIP)    & held-out & 100.0 & 0.330{\tiny$\pm$.025} & 0.318{\tiny$\pm$.018} & $-$3.9 & $+$0.89 \\
\bottomrule
\end{tabular}
\end{table}

\section{Detailed results}
\label{app:detailed_results}

This appendix collects the auxiliary results referenced from \Cref{sec:fpr95,sec:downstream-ordering,sec:during-gen}: full-support sanity-check rows for the P95-real-threshold selection, condition-level Spearman correlations of the trust score with its realism and faithfulness components, the trust/realism/faithfulness decile decomposition on \celeba, the analogous DINOv3 decile binning on \rxrx, and the during-generation decile binning under the \transl.

\paragraph{Full-support sanity checks.}
\Cref{tab:fpr95-selection-full} extends \Cref{tab:fpr95-selection} with the full-support training-distribution rows. Relative KID gains shrink under full support, where the generator already covers the target conditions, but remain positive on both datasets and all three generators ($+$7--16\% on \celeba, $+$34--37\% on \rxrx). The held-out rows are the deployment-relevant ones and are reproduced compactly in \Cref{tab:fpr95-selection}.

\begin{table}[h]
\centering
\caption{Complete P95-real-threshold post-generation sample selection (DINOv3 / SigLIP trust scoring). Full-support rows are sanity checks; held-out rows are the main deployment-relevant setting and are reproduced compactly in \Cref{tab:fpr95-selection}. Threshold set at the 95th percentile of real-sample trust scores; for marginal models, only seen-combo real samples are used for calibration. KID measured in DINOv3 space via shuffled subsampling ($k{=}500$, repeated for CIs). Random baseline: random subset of the same size, each compared against its condition-matched real pool. $\Delta$\% = relative KID improvement of trust-selected over random (positive = better).}
\label{tab:fpr95-selection-full}
\small
\begin{tabular}{ll cccc}
\rowcolor{gray!20}\toprule
Model & Setting & Accept\% & KID$_{\text{trust}}$$\downarrow$ & KID$_{\text{baseline}}$$\downarrow$ & $\Delta$\%$\uparrow$ \\
\midrule
\multicolumn{6}{l}{\emph{CelebA}} \\
Vanilla          & full   & 85.6 & 0.266{\tiny$\pm$.005} & 0.288{\tiny$\pm$.012} & $+$7.5  \\
Vanilla          & held-out & 58.7 & 0.224{\tiny$\pm$.009} & 0.368{\tiny$\pm$.020} & $+$39.1 \\
REPA (DINOv3)    & full   & 87.9 & 0.163{\tiny$\pm$.005} & 0.194{\tiny$\pm$.004} & $+$15.6 \\
REPA (DINOv3)    & held-out & 55.4 & 0.228{\tiny$\pm$.010} & 0.401{\tiny$\pm$.031} & $+$43.1 \\
REPA (SigLIP)    & full   & 85.2 & 0.225{\tiny$\pm$.009} & 0.265{\tiny$\pm$.008} & $+$15.1 \\
REPA (SigLIP)    & held-out & 56.1 & 0.239{\tiny$\pm$.018} & 0.423{\tiny$\pm$.020} & $+$43.6 \\
\midrule
\multicolumn{6}{l}{\emph{RxRx1} (50-condition subset)} \\
Vanilla          & full    & 14.9 & 0.202{\tiny$\pm$.003} & 0.307{\tiny$\pm$.018} & $+$34.2 \\
Vanilla          & held-out &  4.4 & 0.196{\tiny$\pm$.006} & 0.323{\tiny$\pm$.013} & $+$39.4 \\
REPA (DINOv3)    & full    & 21.7 & 0.175{\tiny$\pm$.004} & 0.276{\tiny$\pm$.013} & $+$36.7 \\
REPA (DINOv3)    & held-out &  5.5 & 0.197{\tiny$\pm$.006} & 0.333{\tiny$\pm$.019} & $+$40.8 \\
REPA (SigLIP)    & full    & 18.4 & 0.190{\tiny$\pm$.002} & 0.301{\tiny$\pm$.007} & $+$37.1 \\
REPA (SigLIP)    & held-out &  6.1 & 0.181{\tiny$\pm$.003} & 0.312{\tiny$\pm$.019} & $+$42.0 \\
\bottomrule
\end{tabular}
\end{table}

\paragraph{Component-level condition correlations.}
\Cref{tab:condition-correlation} reports the per-condition Spearman correlations of trust, realism, and faithfulness against the held-out $\Delta$KID. Trust is the strongest predictor across both datasets and all three generators ($\rho(\trust)\!\geq\!0.88$). The decomposition shows that on \rxrx the realism signal carries most of the predictive power, consistent with the held-out perturbations producing manifold drift that the generator cannot reach; on \celeba both faithfulness and realism contribute, with faithfulness being the slightly stronger of the two on the controlled compositional split.

\begin{table}[h]
\centering
\caption{Condition-level Spearman correlations under the harder support-shift regime: CelebA held-out and RxRx1 held-out (DINOv3/SigLIP scoring, marginal models). $\rho(\trust)$ = trust, $\rho$(R) = realism, $\rho$(F) = faithfulness. The main text reports only $\rho(\trust)$ in \Cref{tab:fpr95-selection}.}
\label{tab:condition-correlation}
\small
\begin{tabular}{l ccc ccc}
\rowcolor{gray!20}\toprule
 & \multicolumn{3}{c}{CelebA ($n{=}16$)} & \multicolumn{3}{c}{RxRx1 ($n{=}50$)} \\
\cmidrule(lr){2-4} \cmidrule(lr){5-7}
Model & $\rho(\trust)$$\uparrow$ & $\rho$(R)$\uparrow$ & $\rho$(F)$\uparrow$ & $\rho(\trust)$$\uparrow$ & $\rho$(R)$\uparrow$ & $\rho$(F)$\uparrow$ \\
\midrule
Vanilla       & 0.96 & 0.69 & 0.84 & 0.90 & 0.88 & 0.68 \\
REPA (DINOv3) & 0.96 & 0.71 & 0.86 & 0.90 & 0.88 & 0.75 \\
REPA (SigLIP) & 0.96 & 0.76 & 0.84 & 0.88 & 0.86 & 0.75 \\
\bottomrule
\end{tabular}
\end{table}

\paragraph{Trust/realism/faithfulness decile decomposition (\celeba).}
\Cref{fig:celeba-decile-decomposition} re-plots the main-text \Cref{fig:celeba-decile} with all three score components. Realism alone does not track downstream condition accuracy, consistent with realism being a manifold-quality rather than a condition-aware signal; trust combines the two and drives the ordering on both panels.

\begin{figure}[h]
\centering
\begin{subfigure}[t]{0.48\textwidth}
    \centering
    \includegraphics[width=\textwidth]{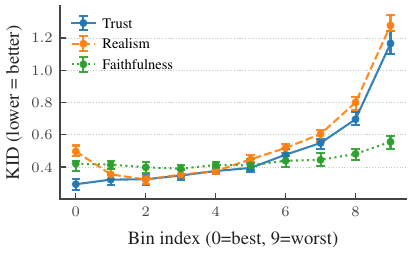}
    \caption{CelebA $\Delta$KID by decile.}
    \label{fig:celeba-decile-kid-decomp}
\end{subfigure}
\hfill
\begin{subfigure}[t]{0.48\textwidth}
    \centering
    \includegraphics[width=\textwidth]{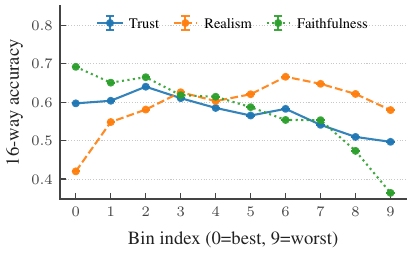}
    \caption{CelebA downstream 16-way accuracy.}
    \label{fig:celeba-decile-downstream-decomp}
\end{subfigure}
\caption{\textbf{Trust / realism / faithfulness decomposition for the main-text \Cref{fig:celeba-decile}.} Same setting as the main figure (REPA-DINOv3 held-out, DINOv3 scoring), but plotting all three score components. Realism does not track downstream condition accuracy, which is consistent with realism not being a condition-aware signal; trust drives the ordering on both panels.}
\label{fig:celeba-decile-decomposition}
\end{figure}

\paragraph{\rxrx DINOv3 decile binning.}
\Cref{fig:rxrx1-decile} shows the analogous DINOv3-space decile binning on the \rxrx 50-condition subset. Better trust deciles produce both lower $\Delta$KID and stronger downstream classifiers, mirroring the \celeba ordering. The main-text discussion emphasizes the CellProfiler morphology validation in \Cref{sec:cp-validation} because CP features are independent of the learned DINOv3 validation encoder used for KID.

\begin{figure}[h]
\centering
\begin{subfigure}[t]{0.48\textwidth}
    \centering
    \includegraphics[width=\textwidth]{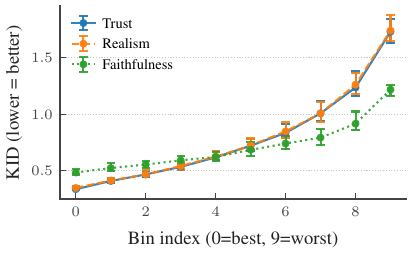}
    \caption{RxRx1 $\Delta$KID by trust decile.}
    \label{fig:rxrx1-decile-kid}
\end{subfigure}
\hfill
\begin{subfigure}[t]{0.48\textwidth}
    \centering
    \includegraphics[width=\textwidth]{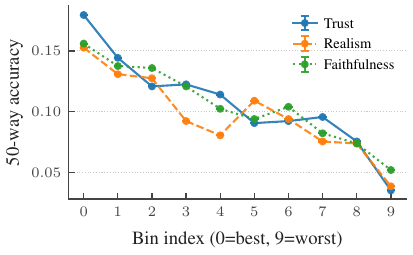}
    \caption{RxRx1 downstream classification.}
    \label{fig:rxrx1-decile-downstream}
\end{subfigure}
\caption{\textbf{RxRx1 DINOv3 decile binning (REPA-DINOv3 held-out, DINOv3 scoring, 50-condition subset).} These learned-encoder trends support the same ordering story as CelebA, but the main text emphasizes the CellProfiler morphology validation because it is independent of the learned DINOv3 validation encoder.}
\label{fig:rxrx1-decile}
\end{figure}

\paragraph{During-generation decile binning.}
\Cref{fig:aligned-decile} confirms that the \transl features reproduce the same monotonic trust ordering on both \celeba and \rxrx without decoding the sample, complementing the support-shift evidence in \Cref{tab:aligned-combined}.

\begin{figure}[h]
\centering
\begin{subfigure}[t]{0.48\textwidth}
    \centering
    \includegraphics[width=\textwidth]{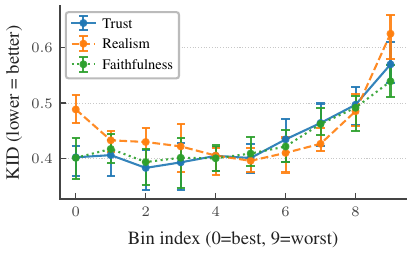}
    \caption{CelebA $\Delta$KID by decile.}
    \label{fig:celeba-aligned-kid}
\end{subfigure}
\hfill
\begin{subfigure}[t]{0.48\textwidth}
    \centering
    \includegraphics[width=\textwidth]{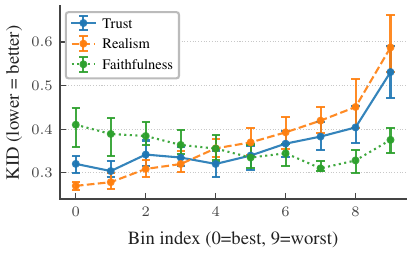}
    \caption{RxRx1 $\Delta$KID by decile.}
    \label{fig:rxrx1-aligned-kid}
\end{subfigure}
\caption{\textbf{During-generation decile binning after translation.} The \transl features recover the monotonic trust trend without decoding the sample and re-encoding it through the \fext.}
\label{fig:aligned-decile}
\end{figure}

\section{Scorer-design ablation}
\label{app:scorer-ablation}

\subsection{Complete during-generation table}
\label{app:during-gen-full}

\Cref{tab:aligned-combined-full} reports the full during-generation table, including full-support sanity-check rows that are omitted from the main text. The main text keeps only the held-out rows because those are aligned with the missing-target evaluation setting.

\begin{table}[H]
\centering
\caption{Complete non-DINOv3 trust scoring: model-internal REPA features and our \transl. \emph{Left}: P95-real-threshold selection (KID in DINOv3 space, random baseline with per-draw condition-matched real). \emph{Right}: condition-level Spearman correlations. Full-support rows are sanity checks; held-out rows are summarized in \Cref{tab:aligned-combined}.}
\label{tab:aligned-combined-full}
\small
\begin{tabular}{ll cccc ccc}
\rowcolor{gray!20}\toprule
 & & \multicolumn{4}{c}{P95-real Selection} & \multicolumn{3}{c}{Correlations} \\
\cmidrule(lr){3-6} \cmidrule(lr){7-9}
Model & Setting & Accept\% & KID$_\text{trust}$$\downarrow$ & KID$_\text{baseline}$$\downarrow$ & $\Delta$\%$\uparrow$ & $\rho(\trust)$$\uparrow$ & $\rho$(R)$\uparrow$ & $\rho$(F)$\uparrow$ \\
\midrule
\multicolumn{9}{l}{\emph{CelebA --- REPA aligned\_mean (during generation)}} \\
REPA (DINOv3) & full    & 97.8 & 0.187{\tiny$\pm$.013} & 0.195{\tiny$\pm$.010} & $+$4.1  & 0.73 & 0.74 & 0.29 \\
REPA (DINOv3) & held-out  & 62.7 & 0.278{\tiny$\pm$.015} & 0.404{\tiny$\pm$.021} & $+$31.2 & 0.95 & 0.64 & 0.91 \\
REPA (SigLIP) & full    & 97.2 & 0.254{\tiny$\pm$.008} & 0.258{\tiny$\pm$.012} & $+$1.6  & 0.81 & 0.80 & 0.53 \\
REPA (SigLIP) & held-out  & 66.7 & 0.293{\tiny$\pm$.024} & 0.425{\tiny$\pm$.018} & $+$31.1 & 0.95 & 0.56 & 0.89 \\
\midrule
\multicolumn{9}{l}{\emph{CelebA, \transl (during generation)}} \\
Vanilla       & full    & 32.6 & 0.254{\tiny$\pm$.015} & 0.299{\tiny$\pm$.007} & $+$15.2 & 0.61 & 0.29 & 0.57 \\
Vanilla       & held-out  & 23.6 & 0.195{\tiny$\pm$.008} & 0.378{\tiny$\pm$.023} & $+$48.4 & 0.83 & 0.74 & 0.82 \\
REPA (DINOv3) & full    & 89.2 & 0.188{\tiny$\pm$.010} & 0.197{\tiny$\pm$.005} & $+$4.4  & 0.78 & 0.64 & 0.71 \\
REPA (DINOv3) & held-out  & 48.4 & 0.250{\tiny$\pm$.016} & 0.397{\tiny$\pm$.017} & $+$37.2 & 0.92 & 0.66 & 0.91 \\
REPA (SigLIP) & full    & 87.2 & 0.248{\tiny$\pm$.009} & 0.263{\tiny$\pm$.011} & $+$5.7  & 0.56 & 0.29 & 0.52 \\
REPA (SigLIP) & held-out  & 17.2 & 0.308{\tiny$\pm$.015} & 0.409{\tiny$\pm$.015} & $+$24.8 & 0.88 & 0.83 & 0.86 \\
\midrule
\multicolumn{9}{l}{\emph{RxRx1 --- REPA aligned\_mean (during generation)}} \\
REPA (DINOv3) & full    & 26.4 & 0.378{\tiny$\pm$.018} & 0.283{\tiny$\pm$.005} & $-$33.4 & $-$0.01 & 0.10  & $-$0.27 \\
REPA (DINOv3) & held-out &  1.4 & 0.537{\tiny$\pm$.032} & 0.329{\tiny$\pm$.037} & $-$63.4 & 0.09    & 0.05  & 0.19 \\
REPA (SigLIP) & full    & 20.6 & 0.628{\tiny$\pm$.029} & 0.315{\tiny$\pm$.006} & $-$99.5 & $-$0.13 & $-$0.08 & $-$0.38 \\
REPA (SigLIP) & held-out &  3.9 & 0.564{\tiny$\pm$.022} & 0.293{\tiny$\pm$.023} & $-$92.3 & 0.13    & $-$0.11 & 0.55 \\
\midrule
\multicolumn{9}{l}{\emph{RxRx1, \transl (during generation)}} \\
Vanilla       & full    & 37.0 & 0.277{\tiny$\pm$.007} & 0.290{\tiny$\pm$.017} & $+$4.7  & 0.70 & 0.36 & 0.67 \\
Vanilla       & held-out & 38.4 & 0.260{\tiny$\pm$.003} & 0.346{\tiny$\pm$.016} & $+$24.9 & 0.78 & 0.31 & 0.76 \\
REPA (DINOv3) & full    & 82.7 & 0.276{\tiny$\pm$.017} & 0.281{\tiny$\pm$.016} & $+$1.9  & 0.18    & 0.67 & $-$0.17 \\
REPA (DINOv3) & held-out & 62.3 & 0.246{\tiny$\pm$.003} & 0.372{\tiny$\pm$.037} & $+$33.8 & 0.88 & 0.47 & 0.80 \\
REPA (SigLIP) & full    & 80.3 & 0.290{\tiny$\pm$.015} & 0.316{\tiny$\pm$.016} & $+$8.1  & 0.65 & 0.68 & 0.00 \\
REPA (SigLIP) & held-out & 60.5 & 0.235{\tiny$\pm$.011} & 0.329{\tiny$\pm$.018} & $+$28.7 & 0.70 & 0.31 & 0.68 \\
\bottomrule
\end{tabular}
\end{table}

\subsection{Fine-grained timestep (full table)}
\label{app:timestep-fine}

\Cref{tab:celeba-timestep-fine} gives the full per-$k$ numbers summarized by \Cref{fig:celeba-timestep-fine} in the main text.

\begin{table}[H]
    \centering
    \small
    \setlength{\tabcolsep}{5pt}
\begin{tabular}{lccccccc}
\rowcolor{gray!20}\toprule
Scoring step & $\rho_{\text{trust}}$ & KID$_{\text{trust}}$$\downarrow$ & KID$_{\text{baseline}}$$\downarrow$ & $\Delta$KID\%$\uparrow$ (accept) & Image $\Delta$ L2 & Steps saved \\
\midrule
$k{=}0$\,($t{\approx}1.00$)  & $0.21$ & 0.290 & 0.358 & +18.8\% (19\%) & -- & ${\approx}100\%$ \\
$k{=}27$\,($t{\approx}0.90$)  & $0.08$ & 0.309 & 0.365 & +15.5\% (19\%) & 74.5 & ${\approx}89\%$ \\
$k{=}55$\,($t{\approx}0.79$)  & $0.37$ & 0.263 & 0.356 & +26.1\% (19\%) & 33.7 & ${\approx}78\%$ \\
$k{=}83$\,($t{\approx}0.68$)  & $0.58$ & 0.240 & 0.357 & +33.0\% (19\%) & 24.2 & ${\approx}67\%$ \\
$k{=}110$\,($t{\approx}0.58$)  & $0.65$ & 0.234 & 0.355 & +34.1\% (21\%) & 19.3 & ${\approx}56\%$ \\
$k{=}138$\,($t{\approx}0.47$)  & $0.73$ & 0.223 & 0.346 & +35.5\% (21\%) & 17.9 & ${\approx}45\%$ \\
$k{=}166$\,($t{\approx}0.36$)  & $0.75$ & 0.208 & 0.342 & +39.1\% (21\%) & 16.7 & ${\approx}34\%$ \\
$k{=}193$\,($t{\approx}0.26$)  & $0.79$ & 0.207 & 0.367 & +43.6\% (22\%) & 15.6 & ${\approx}23\%$ \\
$k{=}221$\,($t{\approx}0.15$)  & $0.78$ & 0.202 & 0.356 & +43.1\% (23\%) & 14.7 & ${\approx}12\%$ \\
$k{=}248$\,($t{\approx}0.04$)  & $0.79$ & 0.210 & 0.347 & +39.4\% (24\%) & 12.3 & ${\approx}1\%$ \\
\midrule
Post generation  & $0.96$ & 0.221 & 0.365 & +39.3\% (59\%) & -- & --- \\
\bottomrule
\end{tabular}
    \caption{\textbf{Fine-grained timestep ablation for cheap internal trust scoring on CelebA held-out} (Vanilla SiT-B/2, 250-step sampler). At each scoring step $k$, the predicted clean latent $\hat{x}_0$ is passed through the \transl into SigLIP space (no VAE decode, no encoder pass) before Mahalanobis scoring. Spearman $\rho_{\text{trust}}$ is the condition-level correlation against $\Delta$KID. KID$_{\text{acc}}$ and KID$_{\text{baseline}}$ are computed on the P95-real-threshold-accepted set and random matched-size subsets, respectively. ``Image $\Delta$ L2'' is the mean L2 distance between consecutively captured VAE-decoded $\hat{x}_0$ images, quantifying how much the predicted clean image still moves between capture steps (undefined at $k{=}0$). ``Steps saved'' is the fraction of sampler steps avoided by abstaining at $k$. The final row is the post-generation DINOv3 oracle (reproduced from \Cref{tab:fpr95-selection-ablation}, Vanilla held-out). The ranking signal grows monotonically with $k$, and P95-real-threshold $\Delta$KID matches or exceeds the post-generation oracle from $k{\approx}166$ onward while still saving ${\geq}34\%$ of generation compute.}
    \label{tab:celeba-timestep-fine}
\end{table}

\subsection{RxRx1 timestep ablation}
\label{app:timestep-rxrx1}

\begin{wrapfigure}[25]{r}{0.5\textwidth}
  \includegraphics[width=\linewidth]{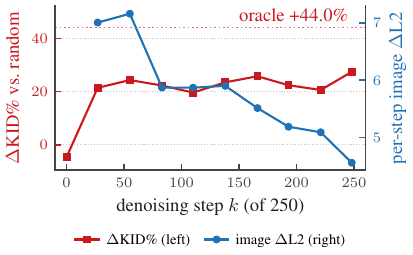}
  \caption{\textbf{\rxrx timestep ablation (Vanilla SiT B/2, 250-step sampler, \transl features).} Red squares: P95-real $\Delta$KID\% of the FPR95-accepted subset against a condition-matched random subset, evaluated on intermediate predicted-clean latents $\hat x_0(k)$ projected through the \transl. Blue circles: per-step L2 change in the VAE-decoded $\hat x_0$. Dashed red line: post-generation DINOv3 oracle ($+44.0\%$). Compared to \Cref{fig:celeba-timestep-fine}, the during generation signal saturates earlier ($k\!\approx\!27$) but plateaus below the post-generation oracle, reflecting the harder support shift on \rxrx.}
  \label{fig:rxrx1-timestep-fine}
\end{wrapfigure}

The same fine-grained timestep ablation on \rxrx held-out is shown in \Cref{fig:rxrx1-timestep-fine} for the Vanilla SiT-B/2 generator with the \transl scoring path and the same 250-step sampler used for \celeba. The qualitative picture matches \Cref{fig:celeba-timestep-fine}: the trust signal is already useful well before the final decoded sample, and the per-step L2 change in the predicted-clean image $\hat x_0$ falls monotonically as the trajectory settles ($\sim 7.0$ at $k\!=\!27$ down to $\sim 4.6$ at $k\!=\!248$).

Two differences are worth flagging. First, on \rxrx the during-generation $\Delta$KID saturates very early --- by $k\!=\!27$ the FPR95-selected subset already reaches roughly $+21\%$ over a condition-matched random subset, and subsequent steps oscillate in the $+20\%$ to $+27\%$ band without a further monotonic trend. Second, the during-generation score does not match the post-generation DINOv3 oracle ($+44.0\%$, dashed line): on this harder support-shift regime the mapped internal feature captures roughly $60\%$ of the discriminative power that the full \fext pass recovers after decoding. This gap is consistent with the regime in which Mahalanobis-based scoring is intrinsically harder on \rxrx (smaller per-attribute margins on the sirna axis, \Cref{tab:a1-diagnostic,tab:pooled-reference-certification}); the during-generation extension preserves a useful trust ordering but does not recover the full precision of the post-generation oracle. The practical compute trade-off is therefore sharper than on \celeba: abstaining as early as $k\!=\!27$ saves ${\approx}89\%$ of the denoising trajectory while retaining most of the during-generation $\Delta$KID signal, but full post-generation scoring remains the higher-precision choice when compute is not the bottleneck.



\end{document}